\theoremstyle{plain}
\newtheorem{theorem}{Theorem}[section]
\newtheorem{proposition}[theorem]{Proposition}
\newtheorem{corollary}[theorem]{Corollary}
\theoremstyle{definition}
\newtheorem{definition}[theorem]{Definition}
\theoremstyle{remark}
\newcommand{\citef}[1]{{\small\citet{#1}}}
\DeclareMathOperator*{\argmax}{arg\,max}
\DeclareMathOperator*{\argmin}{arg\,min}
\pgfplotsset{compat=newest,}
\let\pgfimageWithoutPath\pgfimage 
\renewcommand{\pgfimage}[2][]{\pgfimageWithoutPath[#1]{Images/TikzImages/#2}}
\newcommand{\R}{\mathbb{R}}
\newcommand{\N}{\mathbb{N}}
\newcommand{\st}{\;:\;}
\newcommand{\norm}[1]{\left\|#1\right\|}
\renewcommand{\epsilon}{\varepsilon}
\definecolor{color0}{rgb}{0.00392156862745098, 0.45098039215686275, 0.6980392156862745}
\definecolor{color1}{rgb}{0.8352941176470589, 0.3686274509803922, 0.0}
\definecolor{color2}{rgb}{0.00784313725490196, 0.6196078431372549, 0.45098039215686275}
\icmltitlerunning{Decision Region Quantification}
\begin{document}

\twocolumn[
\icmltitle{Improving Robustness against Real-World and Worst-Case Distribution Shifts through Decision Region Quantification}



\icmlsetsymbol{equal}{*}

\begin{icmlauthorlist}
\icmlauthor{Leo Schwinn\textsuperscript{*}}{fau,mila}
\icmlauthor{Leon Bungert}{bon}
\icmlauthor{An Nguyen}{fau}
\icmlauthor{Ren\'e Raab}{fau}
\icmlauthor{Falk Pulsmeyer}{fau}
\icmlauthor{Doina Precup}{mila,mcgill,deep}
\icmlauthor{Bj\"orn Eskofier}{fau}
\icmlauthor{Dario Zanca}{fau}
\end{icmlauthorlist}

\icmlaffiliation{fau}{University of Erlangen-N\"urnberg}
\icmlaffiliation{bon}{Hausdorff Center for Mathematics, University of Bonn}
\icmlaffiliation{mcgill}{McGill University}
\icmlaffiliation{mila}{Mila - Quebec AI Institute}
\icmlaffiliation{deep}{Google Deepmind}

\icmlcorrespondingauthor{Leo Schwinn}{leo.schwinn@fau.de}

\icmlkeywords{Deep Learning, Robustness, Distribution-Shift}

\vskip 0.3in
]



\printAffiliationsAndNotice{}  

\begin{abstract}
The reliability of neural networks is essential for their use in safety-critical applications. Existing approaches generally aim at improving the robustness of neural networks to either real-world distribution shifts (e.g., common corruptions and perturbations, spatial transformations, and natural adversarial examples) or worst-case distribution shifts (e.g., optimized adversarial examples). In this work, we propose the Decision Region Quantification (DRQ) algorithm to improve the robustness of any differentiable pre-trained model against both real-world and worst-case distribution shifts in the data. DRQ analyzes the robustness of local decision regions in the vicinity of a given data point to make more reliable predictions. We theoretically motivate the DRQ algorithm by showing that it effectively smooths spurious local extrema in the decision surface. Furthermore, we propose an implementation using targeted and untargeted adversarial attacks. An extensive empirical evaluation shows that DRQ increases the robustness of adversarially and non-adversarially trained models against real-world and worst-case distribution shifts on several computer vision benchmark datasets.
\end{abstract}

\section{Introduction}

Deep neural networks (DNNs) achieve remarkable results on a wide range of machine learning problems, including image classification \cite{Krizhevsky2012ImageNet}, speech processing \cite{Oord2016Wavenet}, and more. However, previous work has shown that distribution shifts of the data can severely impact the classification accuracy of such models. Distribution shifts of the data can be separated into multiple categories, such as out-of-distribution data \cite{Hendrycks17Baseline}, common corruptions \cite{hendrycks2021Faces}, and adversarial examples \cite{Madry2018Adversarial}. 

Making deep learning models robust against distribution shifts in the data is a long-standing problem \cite{quinonero2008Dataset}.
One area of research focuses on improving the robustness of models at \textit{training-time}. Here, models are trained with specific procedures (e.g., data augmentations or adversarial training), which cannot be applied to pre-trained models \cite{Madry2018Adversarial,Yin2019Fourier,Geirhos2019Texture,Hendrycks2020Augmix}. These methods usually entail vast computational overhead and require access to the training data \cite{Madry2018Adversarial,Rebuffi2021Fixing}. Another set of methods aims to improve the robustness at \textit{test-time}. However, these approaches are usually semi-supervised, rely on assumptions on the distribution of the test data, and often require additional model training at test-time \cite{Li2017Revisiting,Shorten2019Survey,Zhang2021Memo}. 
Moreover, most prior work focuses either on the robustness against real-world distribution shifts (e.g., common corruptions and perturbations, spatial transformations, and natural adversarial examples) or worst-case distribution shifts (e.g., optimized adversarial examples).


\begin{figure*}[h]
    \centering
    \begin{subfigure}[b]{0.35\textwidth}
        \centering
        \includegraphics[width=\textwidth]{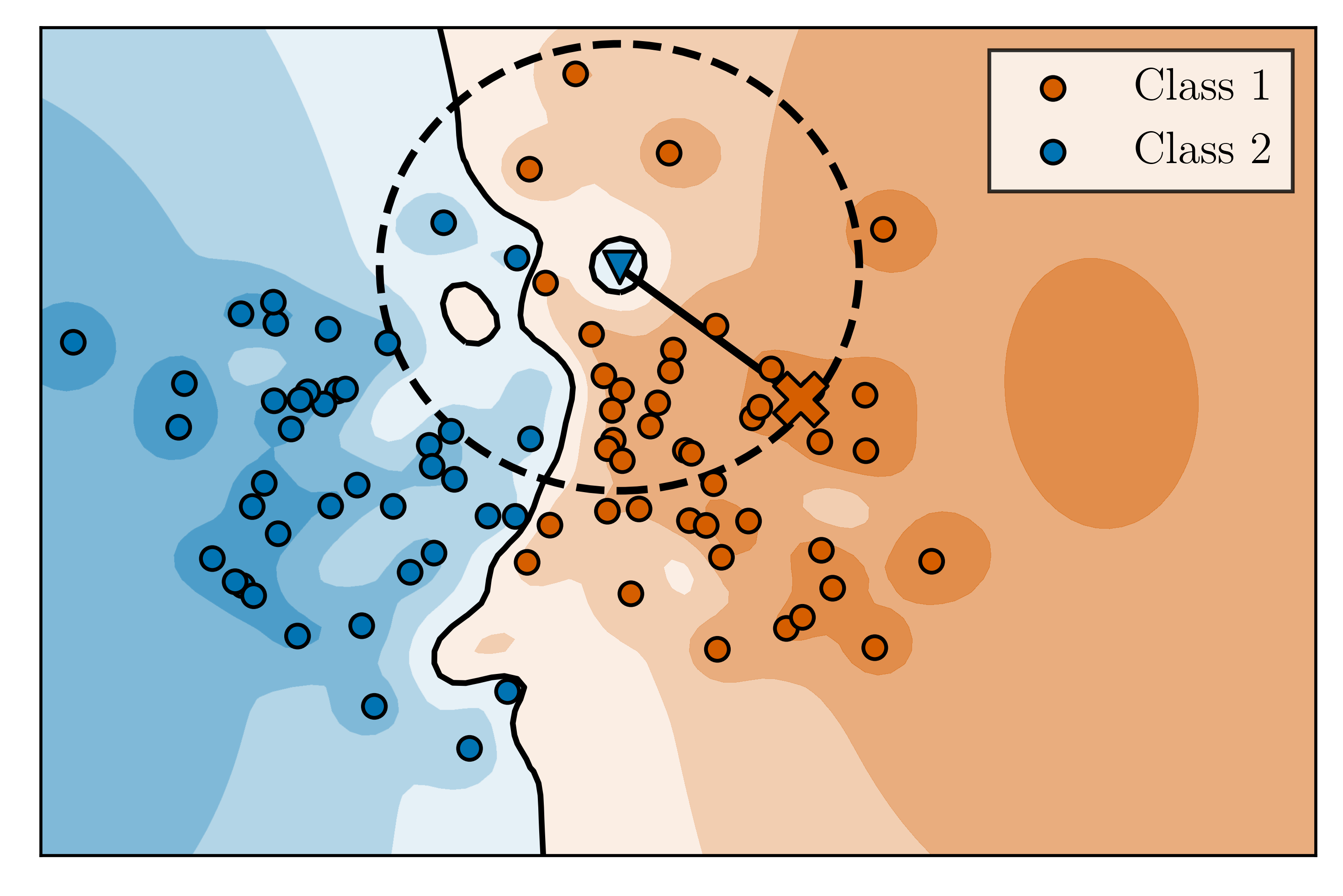}
        \caption{Calibration}
    \end{subfigure}
    \begin{subfigure}[b]{0.35\textwidth}
        \centering
        \includegraphics[width=\textwidth]{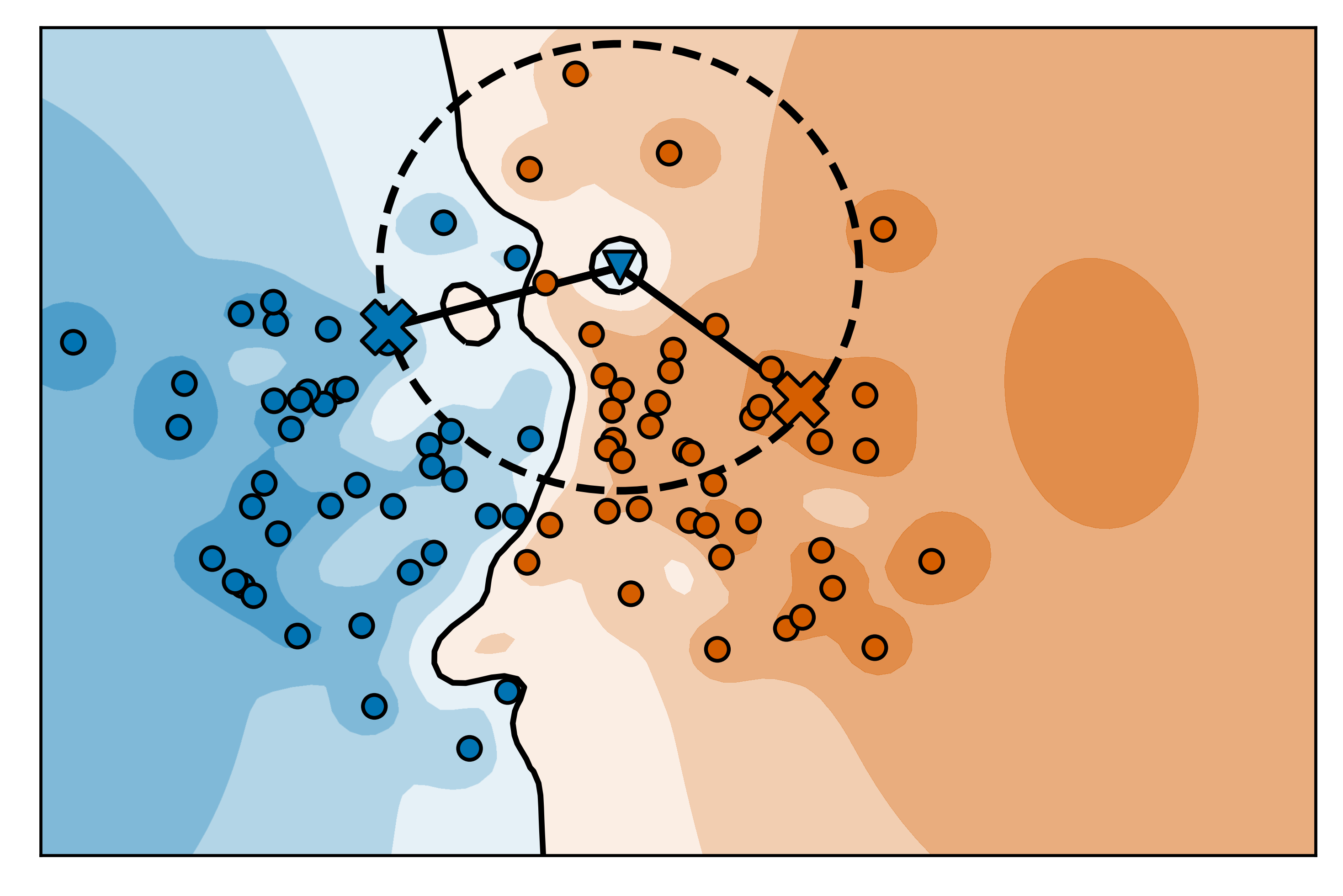}
        \caption{Exploration}
    \end{subfigure}
    \begin{subfigure}[b]{0.35\textwidth}
        \centering
        \includegraphics[width=\textwidth]{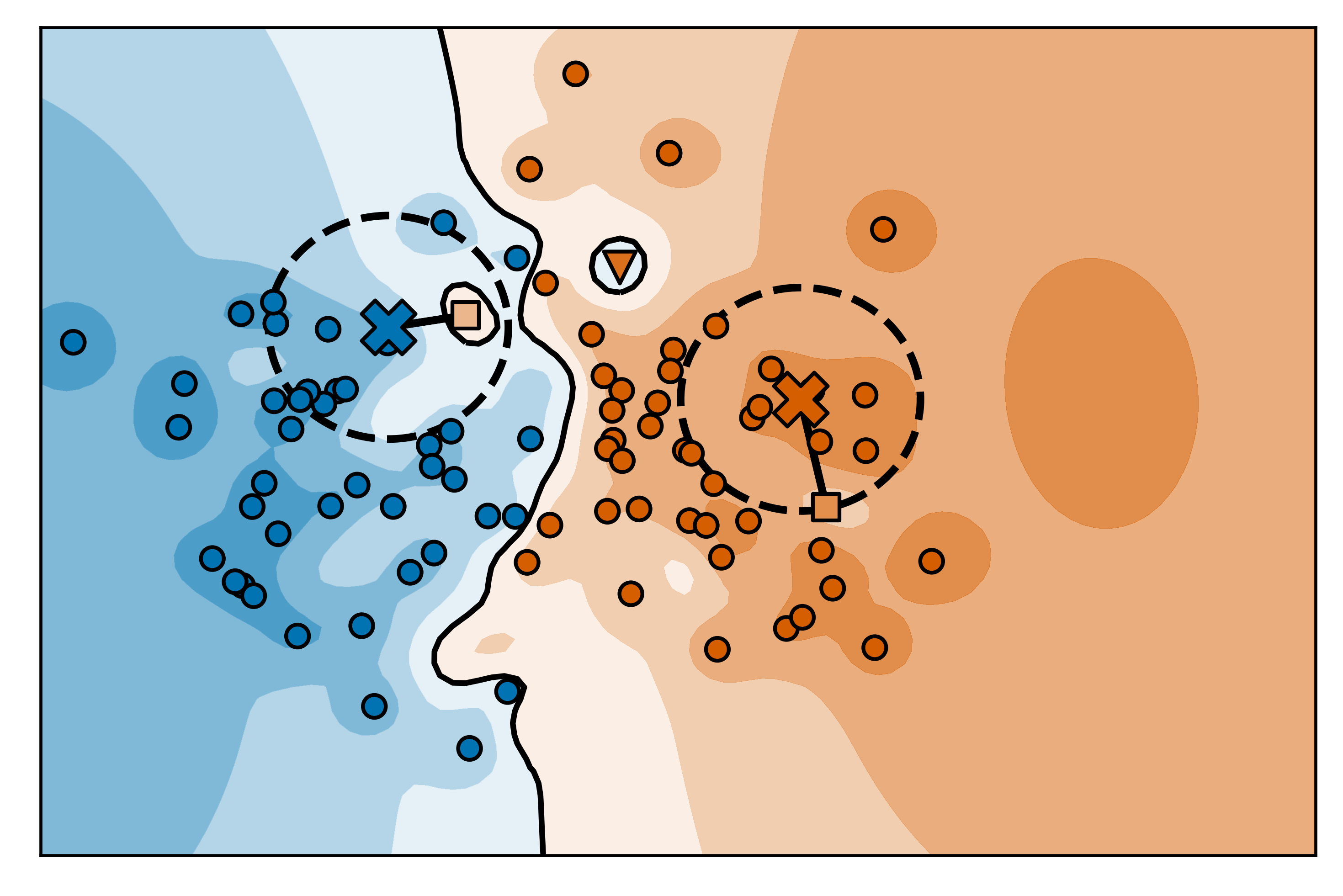}
        \caption{Quantification}
    \end{subfigure}
    \begin{subfigure}[b]{0.35\textwidth}
        \centering
        \includegraphics[width=\textwidth]{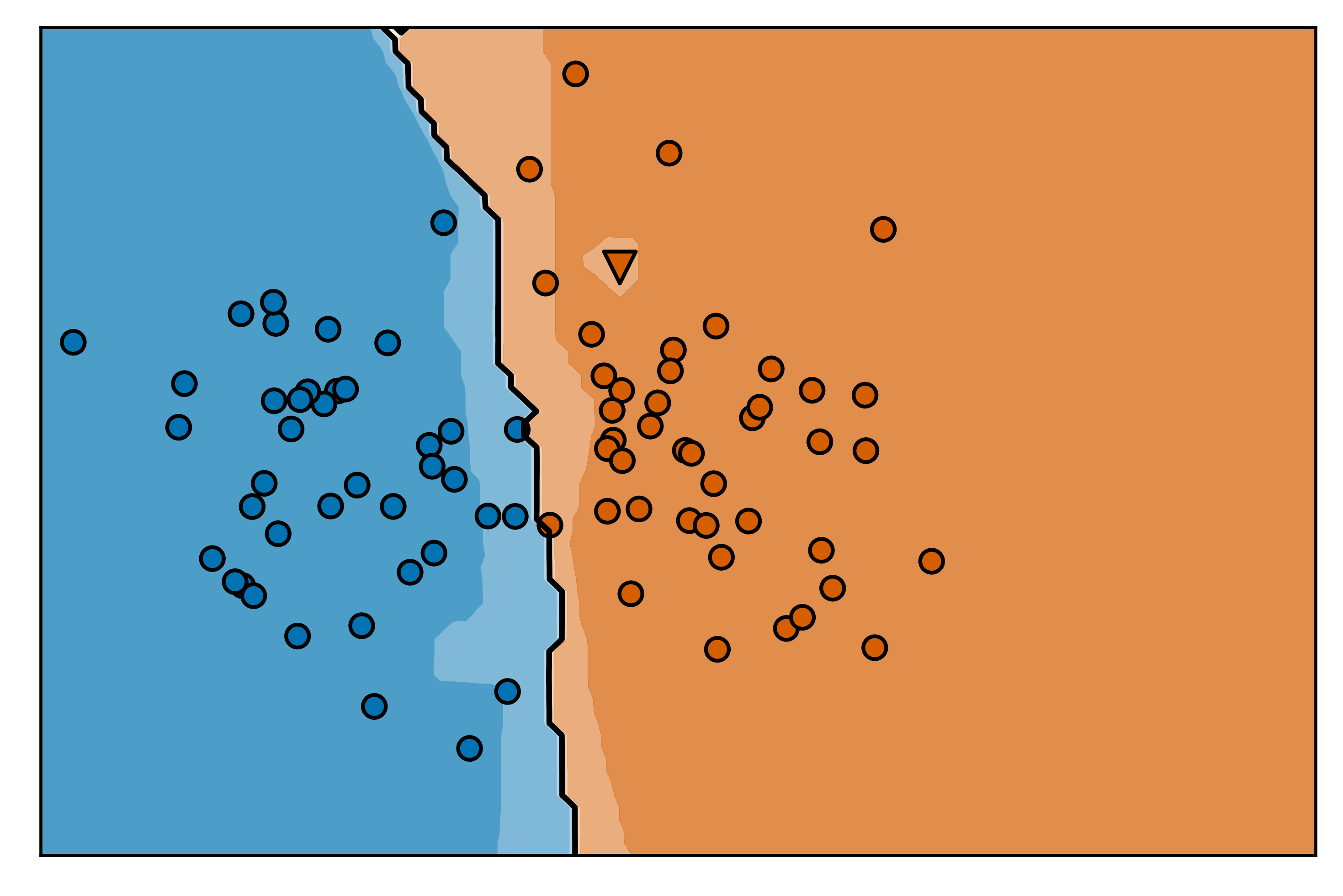}
        \caption{Decision boundary of the DRQ classifier}
    \end{subfigure}
    \caption{Illustration of the Decision Region Quantification (DRQ) algorithm. The decision boundary of an SVM classifier that is trained to separate the samples of two different two-dimensional Gaussian distributions is shown. Training samples from each class are shown as filled colored circles with a black border. One misclassified point from class $2$ is shown as a triangle instead. DRQ is applied for this point as follows: (a) In the \textit{calibration} step of the DRQ algorithm the nearest point in the feature space which is confidently classified as belonging to the other class is discovered (orange cross). The distance to this point is illustrated by the dashed circle. This distance is used as the exploration radius $\epsilon_p$ in the next step. (b) In the \textit{exploration} step, for every class, the input with the highest confidence within the exploration radius is being found (blue and orange cross). (c) In the \textit{quantification} step, for every cross discovered in the exploration step, we search for the lowest confident input within the quantification radius $\alpha \cdot \epsilon_p$ (radius indicated by the dashed circle; resulting points indicated by the colored squares). The DRQ classifier then assigns the original point to the class that exhibits the highest robustness. Here, the prediction changes from class $2$ to class $1$. (d) The resulting decision boundary after applying the DRQ algorithm to the SVM classifier for every point in the input space.}
     \label{fig:drq_algorithm}
\end{figure*}

In this work, we propose the Decision Region Quantification (DRQ) algorithm that analyses the decision surface of a given model to improve its predictions. Unlike previous work, which mainly specialized in one threat model at a time, DRQ simultaneously improves the robustness of models to both real-world and worst-case distribution shifts. The algorithm is illustrated in Figure \ref{fig:drq_algorithm}. Remarkably, the proposed approach does not require any further training data and can be directly combined with pre-trained models during test-time. Additionally, no batch statistics are used during test-time and a single test sample is sufficient for DRQ-based inference. 


Our contributions can be summarized as follows: First, we theoretically motivate the proposed DRQ algorithm by demonstrating its ability to smooth small spurious local extrema in the decision surface. Additionally, we provide an implementation using targeted and untargeted adversarial attacks. Furthermore, in an extensive empirical study, we demonstrate that DRQ consistently improves the robustness of pre-trained models against real-world distribution shifts. Moreover, we show that DRQ can be combined with other pre-processing methods to enhance the robustness even further. Lastly, we show that DRQ considerably improves robustness against worst-case distribution shifts for all adversarially-trained models in our experiments.

\section{Related Work}

The robustness of neural networks to distribution shifts is a broad research area \cite{hendrycks2021Faces}. Among the sub-fields of this research area are domain adaptation \cite{shimodaira2000improving,Wilson2020Domain}, out-of-distribution (OOD) detection \cite{Hendrycks17Baseline,Schwinn21Identifying}, corruption and perturbation robustness \cite{Hendrycks2019Benchmarking,Yin2019Fourier,Geirhos2019Texture,Zhang2021Memo}, robustness to spatial transformations \cite{Engstrom2019Spatial}, adversarial robustness \cite{Szegedy2014Intriguing,Goodfellow2015Explaining,Madry2018Adversarial,bungert2021clip}, and more. Here, we focus on robustness against common corruptions and perturbations, spatial transformations, natural adversarial examples, and optimized adversarial examples.


Most approaches that aim to improve the robustness of neural networks adapt the training phase of the model. This includes methods such as strong data augmentation \cite{Hendrycks2020Augmix} and adversarial training \cite{Goodfellow2015Explaining,Madry2018Adversarial}. Strong data augmentation has shown to be effective for improving the robustness of neural networks against common corruptions (e.g., noise, blur, digital weather corruptions) \cite{Yin2019Fourier,Hendrycks2020Augmix,hendrycks2021Faces}. However, data augmentation has proven to be largely ineffective for increasing robustness against adversarial attacks and common corruptions at the same time. Similarly, adversarial training is mainly effective against adversarial attacks and only marginally improves the robustness of neural networks against common corruptions, while it considerably reduces accuracy on clean data \cite{Raghunathan2020Tradeoff}. 

Another group of methods adjusts the inference process of models during test-time to improve their robustness. Data augmentations can be used during test-time to smooth the prediction of a model and make it more robust \cite{Krizhevsky2012ImageNet,Shorten2019Survey}. Here, the model prediction is averaged over multiple augmented versions of the input. Test-time augmentation methods only require a single test sample and no further re-training of the model. Other test-time adaptions need multiple test inputs to make distributional assumptions about the data. For instance, \citet{Li2017Revisiting} and \citet{Scheider2020Improving} calculate online batch-normalization statistics on a single batch of the test set to improve model robustness. Recently, \citet{Zhang2021Memo} proposed to increase model robustness by enforcing invariance of the model to multiple augmented versions of the same input. This is done by adapting the model parameters to minimize the entropy of a model's average output distribution for all augmentations of a given sample. Nevertheless, all these approaches are validated against real-world distribution shifts only and not against worst-case distribution shifts such as adversarial attacks.

Another related line of research are purification methods. These approaches can be used as a pre-processing step to remove data distribution shifts, such as Gaussian noise, thereby improving generalization. Denoising autoencoders (designed to remove noisy artifacts) are one of the first deep learning-based purification methods \cite{Vincent10Denoising}. Based on a similar principle, neural networks can be trained to purify adversarial examples and thus improve the robustness of another pre-trained model \cite{Samangouei18Generative,Yoon2021Score}. However, while the classifier usually does not need to be re-trained to work with the purifier model, data is still needed to train the purifier for a given task. This introduces additional computational overhead and limits applicability when no training data is available. 

To the best of our knowledge, previous test-time augmentations are designed either for real-world distribution shifts or worst-case distribution shifts. Moreover, prior approaches often require multiple test samples or additional training of the original or surrogate models.

\section{Decision Region Quantification}

In this work, we present the Decision Region Quantification (DRQ) algorithm that can be used with any pre-trained neural network. DRQ does not need additional training of the classifier or a surrogate model and works with a single test sample. Furthermore, it is designed to simultaneously work for real-world and worst-case distribution shifts. This includes common corruptions and perturbations, spatial transformations, natural adversarial examples, and optimized adversarial examples.

\subsection{Notation}

Let $N: \R^d \rightarrow \R^C$ denote a differentiable classifier, where $C\in\N$ is the number of different classes.
Let $F\colon \R^d \to \{1,\dots,C\}$, $F(x):=\argmax(N(x))$ denote the function that directly maps the input to the respective class label. We additionally define $f\colon \R^d \to \R$, $f(x):=\max(\operatorname{softmax}(N(x)))$ as the function that maps from the input to the confidence \textit{w.r.t.} the predicted class and $f_i\colon \R^d \to \R$, $f_i(x)=\operatorname{softmax}(N(x))_i$ as the function that maps from the input to the confidence  \textit{w.r.t.} the class $i \in \{1,\dots,C\}$.
Furthermore, we fix an $\ell_p$ norm $\|\cdot\|_p$ on $\R^d$ and define the ball around $x\in\R^d$ with radius $\epsilon>0$ as $B_p(x;\epsilon):=\{\Tilde{x}\in\R^d\st \|\Tilde{x}-x\|_p\leq\epsilon\}$.


\begin{definition}
    A \textit{decision region} is the subset of the input space containing all inputs $x \in \R^d$ which are classified as the same class $i$:
    \begin{align*}
        M_i:= \left\lbrace x \st  F(x)=i \right\rbrace \text{ for }i \in \lbrace 1, \ldots, C \rbrace.
    \end{align*}
\end{definition}

Every decision region can be regarded as the union of its disjunct connected components, i.e., $M_i = \cup_{j\in \mathcal{J}}  M_i^j $, where $M_i^{j_1}\ne M_i^{j_2}, \forall j_1, j_2 \in \mathcal{J}$. 

\begin{definition}
     Let $x_k \in M_i$, such that $F(x_k)=i $. The \textit{local decision region} associated with $x_k$ is the unique connected component $M_i^{j_{k}}$ which encloses the respective sample, i.e., $x_k \in M_i^{j_{k}}$.
\end{definition}

In Figure \ref{fig:drq_algorithm} the sample depicted as a triangle lies within the local decision region of class $2$, which itself lies within a larger local decision region of class $1$.


\subsection{The DRQ Algorithm}

The proposed DRQ algorithm aims to improve the robustness of a pre-trained classifier against real-world and worst-case distribution shifts. To this end, it is necessary to define robustness and how it can be quantified. An important property of a robust classification decision is that it does not change when the input is \textit{slightly} perturbed (e.g., by adding Gaussian or adversarial noise) \cite{Szegedy2014Intriguing, hendrycks2021Faces}. Thus, we define a classification decision and the associated input sample as robust if no points in direct vicinity of the sample exhibit \textit{considerably} lower confidence for the originally predicted class. 
For a given sample, DRQ quantifies the robustness of local decision region in its vicinity. Subsequently, the class label associated with the most robust local decision region is assigned. 




Specifically, the DRQ classifier $F_{\mathrm{DRQ}}:\R^d\to\{1,\dots,C\}$ takes the following steps to make a prediction for a data point $x\in\R^d$. \\

\noindent \textbf{Calibration:} In the calibration step, a vicinity $S$ of $x$ is defined.
This can be achieved by setting $S:=B_p(x;\epsilon)$ for a suitable value of $\epsilon>0$. We propose to choose $\epsilon$ by searching for the distance of the closest sample which is confidently classified as a different class, i.e., we solve
\begin{align}\label{eq:calibration}
    \epsilon_{p}:=\min_{\Tilde{x}\in\R^d}\big\lbrace \|x - \Tilde{x}\|_p\!\! \st\!\!  F(\Tilde{x})\neq F(x),\, f\left(\Tilde{x}\right) \geq c\big\rbrace,
\end{align}
where $c \in [0, 1)$ is a confidence threshold value.
The vicinity is then defined as $S:=B_p(x;\epsilon_p)$. In cases where prior knowledge about the robustness of a classifier is already available (e.g., adversarially trained models), the calibration step can be omitted and a predefined $\epsilon_p$ can be used instead (see Section \ref{sec:drq_hyperparameters}).


\noindent \textbf{Exploration:} In the exploration step, local decision regions in the vicinity of the input $x\in\R^d$ are explored.
For every class label $i \in \{1, \dots, C\}$, we search for a highest confidence sample $\Tilde{x}_i$, such that  $\Tilde{x}_i \in B_p(x;\epsilon_{p})$ and $F(\Tilde{x}_i) = i$, if it exists (otherwise we do not consider $i$ as a candidate for the correct class). For this we compute
\begin{align}\label{eq:exploration}
    \Tilde{x}_i \in \argmax_{\Tilde{x} \in B_p(x;\epsilon_{p})}\big\lbrace f(\Tilde{x}) \st  F(\Tilde{x}) = i\big\rbrace.
\end{align}




\noindent \textbf{Quantification:} In the quantification step, the robustness of all local decision regions associated with $\Tilde{x}_i$'s is evaluated to identify the most robust prediction. For every sample $\Tilde{x}_i$, we quantify the robustness of its local decision region by searching for a sample $\Hat{x} \in B_p(\Tilde{x}_i;\alpha\epsilon_p)$ for some $\alpha \in [0, 1]$ such that the confidence with respect to the $i$-th class, $f_i(\Hat{x})$, is minimized.
The final decision is then made by defining $F_\mathrm{DRQ}(x)$ as the class label that exhibits the highest robustness:
\begin{align}\label{eq:quantification}
    F_\mathrm{DRQ}(x):=\argmax_{i \in \lbrace 1, \ldots, C \rbrace} \min_{\Hat{x}\in B_p(\Tilde{x}_i;\alpha\epsilon_p)} f_i\left(\Hat{x}\right).
\end{align}

A theoretical analysis of the potential of DRQ to smooth spurious extrema is provided in Appendix \ref{app:theory} and is outlined in the following. We derive lower and upper bounds for the confidence of DRQ-based predictions after quantification depending on $\alpha$. Afterward, we show that DRQ effectively smooths spurious local extrema (i.e., misclassifications induced by distribution shifts such as noise) within the local decision region of the correct class. Moreover, we show that when a classifier is strongly convex around a local optimum, we can make precise statements about the impact of DRQ on the confidence of the classifier depending on the sharpness of the optimum. The smoothing of spurious local extrema is exemplified in Figure \ref{fig:drq_algorithm}. Here, the sample indicated by a triangle lies within a small region containing a local maximum of class $2$ within the decision region of class $1$. After applying DRQ to the classifier, the local extrema is removed and the sample is assigned to class $2$. The extent of smoothing on the decision surface by the DRQ algorithm depends directly on the choice of the vicinity $S$ (i.e., $\epsilon_p$) and the quantification radius defined by $\alpha$. Here, $\alpha \approx 0$ increases the confidence compared to the original classifier for all classes, whereas $\alpha \approx 1$ has the opposite effect and reduces the confidence. For intermediate choices of $0 < \alpha < 1$, we can expect that for spurious maxima, the confidence of the original class increases and the confidence of the spurious class decreases. Thus, in the experiments, we set $\alpha = 0.5$, as preliminary experiments confirmed this to be a good value for both real-world and worst-case distribution shifts. 


\subsection{DRQ Implementation}

In the following, we provide a baseline implementation of the DRQ algorithm for differentiable classifiers. 

\noindent \textbf{Calibration:} 
If we do not use prior information to skip the calibration step, we find the smallest possible $\epsilon_p$ in \eqref{eq:calibration} with the Fast Minimum-Norm (FMN) attack, which can be used for several different $p$-norms \cite{pintor2021fast}. Therefore, we adjust the attack to optimize for minimum-norm adversaries with predefined confidence instead of adversaries with arbitrary confidence. The norm of the final perturbation found by FMN for each sample is subsequently used as the respective exploration radius $\epsilon_p$. 


\noindent \textbf{Exploration:} The optimization problem \eqref{eq:exploration} for finding the local decision region for every class $i$ in the exploration step can be reformulated as a targeted adversarial attack:
\begin{align}\label{eq:targeted}
    \Tilde{x}_i \in\argmin_{\Tilde{x} \in B_p(x;\epsilon_p)}  \mathcal{L} (N (\Tilde{x}), y_i).
\end{align}
Here $y_i\in\R^C$ denotes the one-hot encoded vector of the $i$-th class and the loss function $\mathcal{L}$ is the categorical cross-entropy loss by which problems \eqref{eq:exploration} and \eqref{eq:targeted} become equivalent.
This process is repeated for all possible output classes $i \in {1, \dots, C}$ or can be constrained to only the most likely predictions to reduce the computational overhead. We use a Projected Gradient Descent (PGD) based attack for this step \cite{Madry2018Adversarial}. 

\noindent \textbf{Quantification:} 
Similarly, we use an untargeted PGD-based adversarial attack
\begin{align}\label{eq:untargeted}
    \hat{x}_i \in \argmax_{\Hat{x} \in B_p(\Tilde{x}_i;\alpha\epsilon_p)}\mathcal{L}(N(\Hat{x}),y_j),
\end{align}
where $y_j\in\R^C$ is the one-hot encoded vector of the class $j := F(\Tilde{x}_i)$ and $\mathcal{L}$ is the categorical cross-entropy loss, to solve the inner optimization problem in \eqref{eq:quantification} for every class $i\in\{1,\dots,C\}$. We fix the starting point of every attack to make DRQ completely deterministic and simplify the robustness evaluation with adversarial attacks. Note that it is essential to perform the exploration step for the originally predicted class $j = F(x)$ and not quantify $f_j(x)$ directly. We observed that otherwise an adversary can manipulate the distance of the given sample $x$ to the decision boundary of another class so as to change the outcome of the quantification step and mislead the DRQ-based classifier.




\section{Experiments} \label{sec:experiments}

\subsection{Data}

We evaluate DRQ for real-world and worst-case distribution shifts. These include common corruption and perturbations, natural adversarial examples, spatial transformations, image renditions, and optimized adversarial examples.

\subsubsection{Benchmark Datasets:} We use CIFAR10, CIFAR100 \cite{Krizhevsky2009}, and ImageNet \cite{deng2009imagenet} as benchmark datasets. Additionally, we include CIFAR10-C, CIFAR100-C, and ImageNet-C \cite{Hendrycks2019Benchmarking}, which are corrupted and perturbed versions of the original datasets that contain a wide variety of natural distribution shifts. For CIFAR10-C, CIFAR100-C, and ImageNet-C, we use the strongest corruption severity level of $5$. We additionally use only $1000$ images from every corruption to reduce the computational overhead. Furthermore, we include ImageNet-A, which consists of naturally occurring adversarial examples that can be classified correctly by humans but cause a considerable decrease in performance for machine learning models \cite{hendrycks2021nae}. Lastly, we include ImageNet-R~\cite{hendrycks2021Faces}, which contains various renditions of ImageNet classes (e.g., art, cartoon, or graffiti). For all ImageNet datasets, we apply DRQ only to the top-$10$ largest logits ($N(x)$) to reduce the computational overhead. Note that we do not perform any training in our experiments and only use the respective test datasets.

\subsubsection{Spatial transformations:} In addition to common benchmarks from the literature, we test DRQ against random translations and rotations of the inputs. Here, we apply random translations of up to $4$ pixels in the $x$ and $y$ coordinates for CIFAR10 and CIFAR100 (image size 32x32) and random translations of up to $20$ pixels for ImageNet (image size 224x224). For all datasets, we apply random rotations between $-20$ and $+20$ degrees. 

\subsubsection{Adversarial examples:} Lastly, we evaluate if DRQ can successfully improve the robustness of models against adversarial attacks. We use an ensemble of different attacks and report the worst-case performance against all attacks \cite{croce2020reliable}. Specifically, we consider an attack as successful if any of the ensemble attacks can induce a misclassification for the given input. Several different adversarial attacks have been proposed \cite{Goodfellow2015Explaining,Madry2018Adversarial,croce2020reliable,SchwinnExploring2021}. We choose four complementary attacks from the autoattack ensemble for our evaluation, which have shown to lead to a reliable evaluation \cite{croce2020reliable}. This includes \textit{FAB} with $100$ iterations and $9$ target classes, \textit{square} with $5000$ queries, and two versions of \textit{APGD} with either the DLR or CE loss and $100$ iterations. 

Additionally, we designed two adaptive attacks to circumvent our defense. With these attacks, we aim to find adversarial examples that show high robustness in the quantification step of DRQ and simultaneously lead to misclassification. For both attacks, we use the EOT method \cite{Athalye2018Synth} to find robust adversarial examples and use the PGD algorithm for optimization with $100$ iterations \cite{Madry2018Adversarial}. 
For both attacks, we calculate the gradient direction of the adversarial attack as the weighted average over multiple noisy gradients at every attack iteration. For the first attack, we calculate noisy gradients by sampling random noise in the $\epsilon$-ball $B_p(x;\epsilon)$ around sample $x$ and injecting it to the current adversarial example. Here, we sample $10$ noisy gradients at every attack iteration. Hence, the adversarial example found by the attack is made more robust against noise. For the second attack, we generate the noise with PGD-based adversarial attacks in the $\epsilon$-ball with $7$ iterations each. We use $7$ iterations as this has proven to be sufficient for adversarial training \cite{Madry2018Adversarial}. For the second attack, we sample only $4$ noisy gradients at every attack iteration due to the computational complexity of the attack. We refer to the first attack as PGD\textsubscript{noise} and the second attack as PGD\textsubscript{attack}.We do not early stop the adversarial attacks once the baseline model misclassifies a given input but rather construct high-confidence adversarial attacks. We additionally investigated end-to-end attacks on the whole DRQ pipeline in Appendix~\ref{app:attack_success}. As the evaluation is empirical, we encourage other researchers to create adaptive attacks that circumvent our method to update our results accordingly.

\subsection{Models}

We combine DRQ with several prior methods that aim to make models robust against real-world or worst-case distribution shifts to assess the versatility of DRQ. For our evaluations we only use pre-trained models. We use models trained to be robust against real-world distribution shifts. Here, we evaluate the model proposed by \citet{hendrycks2021Faces} that is trained with a combination of Augmix and DeepAugment (refereed to as No-AT) \cite{Hendrycks2020Augmix}. To investigate if DRQ can be combined with different adversarial training approaches, we additionally use models trained with $\ell_{\infty}$ or $\ell_{2}$-based adversarial training (referred to as $\ell_{\infty}$-AT and $\ell_{2}$-AT) \cite{Rice2020Overfitting,Wong2020Fast,Gowal2020Uncovering}. Models are either taken from the RobustBench library \cite{croce2020robustbench} or from the GitHub repositories of the authors' \cite{Hendrycks2019Benchmarking}.

\begin{table*}
\small
\caption{Accuracy [$\%$] of standard inference and DRQ inference in two different norms. The accuracy is shown for differently trained models on the CIFAR10-C, CIFAR100-C, ImageNet-C, and ImageNet-A datasets. The best accuracy is shown in \textbf{bold} and the second best is \underline{underlined}. Note that we show the mean corruption accuracy instead of the mean corruption error for the common corruption datasets (indicated with -C) to simplify the comparison with the other results.}
\centering
\begin{tabular}{lrrrr}
    \toprule
    Models & Training & Standard & DRQ $\ell_{\infty}$ & DRQ $\ell_2$  \\
    \midrule
    \textbf{CIFAR10-C} \\
    \citef{Wong2020Fast} &  $\ell_{\infty}$-AT & 66.41 & \textbf{68.44} & \underline{67.32} \\
    \citef{Gowal2020Uncovering} & $\ell_{\infty}$-AT & 72.58 & \textbf{73.12} & \underline{73.03} \\
    \citef{Rice2020Overfitting} & $\ell_{2}$-AT & \underline{71.96} & 70.08 & \textbf{73.2} \\
    \citef{hendrycks2021Faces} & No-AT & 81.33 & \underline{81.61} & \textbf{81.88} \\
    \midrule
     \textbf{CIFAR100-C} \\
    \citef{Rice2020Overfitting} & $\ell_{\infty}$-AT & 35.09 & \textbf{36.12} & \underline{35.68} \\
    \midrule 
     \textbf{ImageNet-C} \\
    \citef{Wong2020Fast} & $\ell_{\infty}$-AT & 22.82 & \textbf{23.25} & \underline{22.93} \\
    \citef{hendrycks2021Faces} & No-AT & \underline{56.86} & 56.48 & \textbf{57.48} \\
    \midrule
    \textbf{ImageNet-A} \\
    \citef{Wong2020Fast} & $\ell_{\infty}$-AT & 1.6 & \textbf{2.1} & \underline{1.9} \\
    \citef{hendrycks2021Faces} & No-AT & 3.76 & \underline{3.9} & \textbf{4.5} \\
    \midrule
    \textbf{ImageNet-R} \\
    \citef{Wong2020Fast} & $\ell_{\infty}$-AT & 31.4 & \textbf{33.3} & \underline{32.4} \\
    \citef{hendrycks2021Faces} & No-AT & \underline{46.8} & 45.2 & \textbf{47.0} \\
    \bottomrule
    \end{tabular}
    \label{tab:corruptions}
\end{table*}

\subsection{DRQ Hyperparameters} \label{sec:drq_hyperparameters}

The DRQ algorithm has five different hyperparameters: the confidence threshold $c$ for the calibration phase, the number of iterations for the calibration, search, and quantification step and the norm $\ell_p$ used to assess the robustness of a local decision region. In a preliminary experiment, we evaluated the sensitivity of the performance of DRQ to hyperparameter choices on the CIFAR10 dataset. We conducted all initial experiments on the adversarially trained model proposed in \cite{Wong2020Fast}. We did all experiments on clean data and adversarial examples generated by a standard PGD attack with $100$ iterations.

First, We explored the effect of the confidence threshold $c$ on the performance of DRQ. For adversarially trained models, we have prior knowledge about their respective robustness since they were trained to be robust against a certain adversarial budget $\epsilon$. Thus, we omitted the calibration phase for all adversarially trained models to reduce the computational overhead. For the model proposed in \cite{Wong2020Fast}, the mean $\epsilon_p$ value to misclassify a given sample was approximately equal to twice the $\epsilon$ used for adversarial training the model and we used $\epsilon_p = 2\cdot \epsilon$. This resulted in $\epsilon_{\infty} = 16/255$ and $\epsilon_{2} = 1$ for adversarially trained CIFAR models and $\epsilon_{\infty} = 8/255$ for adversarially trained ImageNet models. For the non-adversarially trained models, we set the confidence threshold to the confidence of the respective sample. Thereby, the confidence of the inputs found with DRQ in the exploration step is approximately equal to the original confidence of the model. An analysis of the number of iterations required for the different steps is given in Appendix \ref{app:iterations}. We choose the $\ell_{\infty}$ and $\ell_2$ norm to measure the robustness of the local decision regions that are commonly used to measure adversarial robustness in previous work \cite{croce2020robustbench}. An important assumption of the DRQ algorithm is that the robustness of training data is higher than the robustness of out-of-distribution data and adversarial examples. Prior work indicates that models trained to be robust against perturbations in these norms have this property \cite{RothAdversarial2020}.



\section{Results}

In this section, we summarize the results of the experiments. We combine DRQ with previously proposed training-time methods from the literature that aim to increase the robustness of neural networks. We then evaluate the robustness for real-world and worst-case distribution shifts with and without DRQ. We specifically focus on training-time methods since current test-time methods have different prerequisites and are designed for either real-world or worst-case distribution shifts but not for both.

\subsection{Common Corruptions, Natural Adversarial Examples, and Image Renditions} \label{sec:common_corruptions}

Table \ref{tab:corruptions} compares the accuracy of standard inference with the DRQ method applied in the $\ell_{\infty}$ and $\ell_2$ norm on the CIFAR10-C, CIFAR100-C, ImageNet-C, ImageNet-A, and ImageNet-R datasets. The DRQ method achieves the highest accuracy for every dataset and model. For all adversarially trained models, DRQ is most effective when the same norm is used for adversarial training and DRQ. For all non-adversarially trained models, DRQ is most effective in the $\ell_2$ norm. Both DRQ approaches are more effective than standard inference in all but two cases. For the ImageNet model proposed in \citet{Hendrycks2019Benchmarking} DRQ $\ell_{\infty}$ slightly decreases the accuracy compared to standard inference. Note that the accuracy for clean data differs only marginally between standard and DRQ-based inference in our experiments (see Appendix~\ref{app:drq_standard_network}). 

DRQ can be used with any pre-trained differentiable classifier without further requirements. Nevertheless, we explored if DRQ can be combined with prior pre-processing approaches from the literature that need additional training to further increase the robustness. Therefore, we combined DRQ with a denoising autoencoder \cite{Vincent10Denoising} trained on the CIFAR10 dataset. We specifically implemented the DnCNN architecture proposed in \cite{ZhangDenoising17}. Training details are included in Appendix \ref{app:auto}. The combination of the denoising autoencoder and DRQ increases the accuracy from $68.44\%$ to $70.74\%$ (see Table \ref{tab:denoising}). 

\begin{table}
\caption{Mean corruption accuracy [$\%$] of standard inference and DRQ in two different norms for the model proposed in \cite{Wong2020Fast} on the CIFAR10-C dataset with and without a denoising autoencoder. The best accuracy is shown in \textbf{bold} and the second best is \underline{underlined}.}
\centering
\begin{tabular}{lrrr}
    \toprule
 & Standard & DRQ $\ell_{\infty}$ & DRQ $\ell_{2}$ \\
\midrule
Model &  66.41 & \textbf{68.44} & \underline{67.32} \\
Model + Denoiser & 67.98 & \textbf{70.74} & \underline{69.92} \\
\bottomrule
    \end{tabular}
    \label{tab:denoising}
\end{table}

\begin{table*}
\small
\caption{Accuracy [$\%$] of standard inference and DRQ inference in two different norms. The accuracy is shown for differently trained models for rotations and translation of the original input. The best accuracy is shown in \textbf{bold} and the second best is \underline{underlined}. The values are arranged as follows (Standard / DRQ $\ell_{\infty}$ / DRQ $\ell_2$).}
\centering
\begin{tabular}{lrrrr}
    \toprule
    Models & Training & Rotations & Translations  \\
    \midrule
    \textbf{CIFAR10} \\
    \citef{hendrycks2021Faces} & No-AT & \underline{74.93} / 74.00 / \textbf{87.66} & \underline{92.08} / 91.65 / \textbf{92.61}  \\
    \citef{Wong2020Fast} &  $\ell_{\infty}$-AT & 43.74 / \textbf{61.31} / \underline{52.72} & 59.08 / \textbf{61.23} / \underline{61.11} \\
    \citef{Gowal2020Uncovering} & $\ell_{\infty}$-AT & 51.93 / \textbf{64.56} / \underline{58.01} & 78.16 / \textbf{78.46} / \underline{78.51} \\
    \citef{Rice2020Overfitting} & $\ell_{2}$-AT & 43.49 / \textbf{60.03} / \underline{56.20} & 63.96 / \textbf{66.26} / \underline{65.55} \\
    \midrule
    \textbf{CIFAR100} \\
    \citef{Rice2020Overfitting} & $\ell_{\infty}$-AT & 33.73 / \textbf{41.25} / \underline{39.16} & \underline{45.07} / \textbf{45.35} / 44.87 \\
    \midrule
    \textbf{ImageNet} \\
    \citef{Wong2020Fast} &  $\ell_{\infty}$-AT & 21.83 / \textbf{28.80} / \underline{23.02} & 42.51 / \underline{42.80} / \textbf{42.97} \\
    \citef{hendrycks2021Faces} & No-AT & 53.59 / \underline{54.03} / \textbf{56.73} & \underline{70.83} / 70.21 / \textbf{71.54} \\
    \bottomrule
    \end{tabular}
    ´\label{tab:spatial}
\end{table*}

\subsection{Spatial Transformations}

Table \ref{tab:spatial} compares the accuracy of standard inference with the DRQ method applied in the $\ell_{\infty}$ and $\ell_2$ norm on the CIFAR10, CIFAR100, and ImageNet datasets. DRQ improves robustness up to $17.57\%$ for rotations and up to $2.3\%$ for translations. The DRQ method achieves the highest accuracy for every dataset and model. In contrast to the evaluation on common corruptions, DRQ is not always most effective when the same norm is used for adversarial training and DRQ against spatial transformations. However, for all adversarially trained models except for the CIFAR100 model proposed by \citet{Rice2020Overfitting} both DRQ methods show higher robustness compared to standard inference. Furthermore, for adversarially trained models, using DRQ with the same norm as adversarial training always increases the robustness. As for common corruptions, DRQ is most effective in the $\ell_2$ norm for all non-adversarially trained models, 

\subsection{Adversarial Attacks} \label{sec:adversarial}

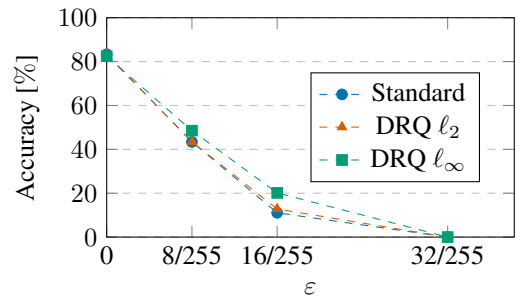
\begin{figure}[h]
\centering
\begin{tikzpicture}

\begin{axis}[
    height=4.5cm,
    width=7cm,
    xlabel={$\epsilon$},
    ylabel={Accuracy [$\%$]},
    xmin=0,
    xmax=0.15,
    xtick={0, 0.03137254901, 0.06274509803, 0.12549019607},
    xticklabels={0, 8/255, 16/255, 32/255},
    ymin=0, ymax=100,
    ytick={0, 20, 40, 60, 80, 100},
    ymajorgrids=true,
    grid style=dashed,
    legend style={at={(0.5,0.25)},anchor=south west}
]
\addplot[
    dashed,
    color=color0,
    mark=*,
    mark options={solid}
    ]
    coordinates {
    (0, 83.34)(0.03137254901, 43.37)(0.06274509803, 11.07)(0.12549019607, 0)
    };
\addlegendentry{Standard}
\addplot[
    dashed,
    color=color1,
    mark=triangle*,
    mark options={solid}
    ]
    coordinates {
    (0, 83.17)(0.03137254901, 43.39)(0.06274509803, 12.79)(0.12549019607, 0)
    };
\addlegendentry{DRQ $\ell_2$}
\addplot[
    dashed,
    color=color2,
    mark=square*,
    mark options={solid}
    ]
    coordinates {
    (0, 82.44)(0.03137254901, 48.52)(0.06274509803, 20.12)(0.12549019607, 0)
    };
\addlegendentry{DRQ $\ell_{\infty}$}
\end{axis}

\end{tikzpicture}
\caption{Worst-case accuracy [$\%$] for the model proposed by \citet{Wong2020Fast} on the CIFAR10 dataset. The accuracy is shown for standard inference, DRQ $\ell_{\infty}$, and DRQ $\ell_2$. The accuracy converges to $0$ for all methods for $\epsilon = 32/255$}
\label{fig:epsilon}
\end{figure}

\begin{table*}
\small
\caption{Worst-case accuracy [$\%$] of standard inference and DRQ inference in two different norms against an ensemble of adversarial attacks. The best accuracy is shown in \textbf{bold} and the second best is \underline{underlined}.}
\centering
\begin{tabular}{lrrrr}
    \toprule
    Models & Training & Standard & DRQ $\ell_{\infty}$ & DRQ $\ell_2$  \\
    \midrule
    \textbf{CIFAR10} \\
    \citef{Wong2020Fast} &  $\ell_{\infty}$-AT & 43.37 & \textbf{48.52} & \underline{43.39} \\
    \citef{Gowal2020Uncovering} & $\ell_{\infty}$-AT & 61.51 & \textbf{64.41} & \underline{62.94} \\
    \citef{Rice2020Overfitting} & $\ell_{2}$-AT & 66.77 & \textbf{75.16} & \underline{72.65} \\
    \citef{hendrycks2021Faces} & No-AT & 0.00 & 0.00 & 0.00 \\
    \midrule
     \textbf{CIFAR100} \\
    \citef{Rice2020Overfitting} & $\ell_{\infty}$-AT & \underline{19.01} & \textbf{24.19} & 17.79 \\
    \bottomrule
    \end{tabular}
    \label{tab:adversarial}
\end{table*}

\begin{table}
\caption{Worst-case accuracy [$\%$] of standard inference and DRQ $\ell_{\infty}$ against an ensemble of adversarial attacks. DRQ $\ell_{\infty}$ + PGD\textsubscript{noise} denotes the DRQ method, where the adversarial attack in the exploration step is replaced with PGD\textsubscript{Noise}.}
\centering
\begin{tabular}{lrrr}
    \toprule
Attacks & Clean & Worst-case \\
\midrule
Standard & 83.34 & 43.37 \\
DRQ $\ell_{\infty}$ & 82.44 & 48.52 \\
DRQ $\ell_{\infty}$ + PGD\textsubscript{noise} & 83.01 & 49.56 \\
        \bottomrule
    \end{tabular}
    \label{tab:adversarial_adaptive}
\end{table}

Table \ref{tab:adversarial} compares the accuracy of standard inference with the DRQ method applied in the $\ell_{\infty}$ and $\ell_2$ norm on the CIFAR10 and CIFAR100 datasets. Against adversarial attacks, DRQ cannot be applied to the top-n predictions only, as this could be exploited by the adversary\footnote{The adversary could for example try to make the correct class the least likely prediction}. Thus, we omitted ImageNet from this experiment due to the computational complexity of performing DRQ on all $1000$ target classes. DRQ improves the worst-case robustness up to $5.15\%$. Against adversarially trained models, DRQ is always most effective in the $\ell_{\infty}$ norm and always more effective than standard inference. For the non-adversarially trained model proposed in \cite{Hendrycks2019Benchmarking}, DRQ is not able to increase the worst-case robustness above $0\%$. While DRQ shows high robustness against some of the attacks (e.g., FAB: $84.49\%$) it is not able to defend against the adaptive attack. Out of the individual attacks, the two adaptive attacks achieved the highest success rate against DRQ-based inference in all cases (see Appendix \ref{app:attack_success} for more details). 



Prior defenses against adversarial attacks were later shown to be ineffective against stronger adaptive attacks \cite{Athalye2018Obfuscated}. Adaptive attacks against DRQ need to find robust local decision regions that belong to the wrong class in the vicinity of the input. Such attacks could be used to find more robust regions in the exploration step of the algorithm. We additionally explored if DRQ-based inference can be made more robust by using the adaptive PGD\textsubscript{noise} attack during the exploration step of the algorithm. The results are summarized in Table \ref{tab:adversarial_adaptive}. We observe that replacing standard PGD in the exploration step with the adaptive variant increases the worst-case robustness and clean accuracy of DRQ-based inference.

Furthermore, we investigated if DRQ robustness against adversarial attacks largely stems from gradient obfuscation. For this purpose, we conducted additional experiments on the CIFAR10 model proposed in \cite{Wong2020Fast} that are inspired by the findings of \cite{Athalye2018Obfuscated}. Figure \ref{fig:epsilon} demonstrates how the worst-case accuracy decreases with an increasing maximum perturbation norm $\epsilon$ for the adversarial attacks. For all inference approaches, the robustness decreases rapidly to $0\%$ against values of $\epsilon = 32/255$. We also observed an increase in the success rate of the attacks when we increased the number of attack iterations by a factor $\beta$. For $\beta=2$, the robustness of DRQ $\ell_{\infty}$ decreases from $48.52\%$ to $48.01\%$ and it further reduces to $47.93\%$ for $\beta=4$. Additionally, we conducted a random noise attack, by constructing $1000$ random perturbations within the $\epsilon$-ball, but this only reduced the accuracy marginally against DRQ. 

Lastly, we investigated the cosine similarity between DRQ direction $\gamma_{\mathrm{DRQ}} = \Hat{x}_{\mathrm{DRQ}} - x$ and the adversarial perturbation direction $\gamma = x_{\mathrm{adv}} - x$ for the model proposed by \citet{Wong2020Fast}. Here, we denote by $\Hat{x}_{\mathrm{DRQ}}$ the most robust sample found in the quantification step \eqref{eq:quantification} of DRQ. We specifically explored samples that were classified correctly by DRQ but wrongly by regular inference. The cosine similarity between the adversarial perturbation direction and the DRQ direction is negative for all attacks. This indicates that DRQ removes the perturbation generated by the adversarial attacks, thereby improving classification performance. It also supports the assumption that in-distribution data exhibits higher robustness than distribution-shifted data. More details are included in Appendix \ref{app:drq_directions}.



\section{Conclusion and Outlook}

In this paper, we propose a test-time algorithm, Decision Region Quantification (DRQ), designed to improve the robustness of any pre-trained differentiable classifier against a wide variety of distribution shifts. We theoretically motivate DRQ and show its effectiveness in an extensive empirical study which includes various models and datasets. In contrast to prior work, the proposed algorithm can handle common corruptions, spatial transformations, natural adversarial examples, image renditions, and adversarial examples simultaneously. DRQ proves to be most effective for models that already exhibit adversarial robustness in our experiments. Furthermore, DRQ still entails considerable computational overhead compared to standard inference and more effective implementations need to be explored. Additionally, while DRQ has been shown to be effective in computer vision tasks in our experiments, its performance in other application areas such as speech processing and natural language processing has yet to be evaluated. 

\section*{Author Contributions}

L.S. conceived the proposed method, conducted all experiments and wrote the initial draft of the paper. D.P., B.E., and D.Z. provided feedback to the experiments and structure of the paper. L.B. and L.S. developed the theorem and L.B. formalized it. All authors analyzed the results and contributed to the final manuscript.

\section*{Acknowledgements}

Leo Schwinn gratefully acknowledges support by a fellowship within the IFI programme of the German Academic Exchange Service (DAAD).
An Nguyen gratefully acknowledges personal funding from Siemens Healthineers. Bjoern M. Eskofier gratefully acknowledges the support of the German Research Foundation (DFG) within the framework of the Heisenberg professorship programme (grant number ES 434/8-1).

\bibliography{bib}
\bibliographystyle{icml2022}

\newpage
\appendix
\onecolumn


\section{Theoretical Analysis of DRQ for Binary Classification} \label{app:theory}

In this section we perform a theoretical analysis of DRQ in the setting of binary classification, as it was illustrated in Figure~\ref{fig:drq_algorithm}.
The general case of more than two classes is treated analogously by a one-versus-all approach.
In the binary situation the DRQ algorithm can be reformulated as follows: 
We consider a differentiable binary classifier $u:\R^d\to[0,1]$ for classifying data into the classes $0$ and $1$ with the the label decision $\operatorname{round}(u(x))$.
Here $\operatorname{round}(t)$ for $t\in[0,1]$ is the rounding operation, defined as $\operatorname{round}(t)=0$ if $t<0.5$ and $\operatorname{round}(t)=1$ if $t\geq 0.5$.
We, furthermore, fix a calibration value of $\epsilon>0$ and a parameter $\alpha\in[0,1]$.
We define the exploration and quantification steps associated with an input $x\in\R^d$ as
\begin{alignat*}{2}
    &\Tilde{x}_0 \in \argmin_{\Tilde{x}\in B_p(x;\epsilon)} u(\Tilde{x}), \qquad
    &&\Tilde{x}_1 \in \argmax_{\Tilde{x}\in B_p(x;\epsilon)} u(\Tilde{x}), \\
    &\Hat{x}_0^\alpha \in \argmax_{\Hat{x}\in B_p(\Tilde{x}_0;\alpha\epsilon)} u(\Hat{x}), \qquad
    &&\Hat{x}_1^\alpha \in \argmin_{\Hat{x}\in B_p(\Tilde{x}_1;\alpha\epsilon)} u(\Hat{x}).
\end{alignat*}
The label decision is now taken based on which of the two points $\hat{x}_0^\alpha$ and $\hat{x}_1^\alpha$ realizes the smaller value in 
\begin{align*}
    \min\big\lbrace u(\hat{x}_0^\alpha),1-u(\hat{x}_1^\alpha)\big\rbrace.
\end{align*}
In other words, the DRQ classifier with parameter $\alpha\in[0,1]$ acts as follows:
\begin{align}\label{eq:binary_DRQ}
    u_\mathrm{DRQ}^\alpha(x) = 
    \begin{cases}
    u(\hat{x}_0^\alpha),\quad \text{if }u(\hat{x}_0^\alpha)\leq 1-u(\hat{x}_1^\alpha),\\
    u(\hat{x}_1^\alpha),\quad \text{if }u(\hat{x}_0^\alpha)> 1-u(\hat{x}_1^\alpha),
    \end{cases}
    \qquad
    F_\mathrm{DRQ}^\alpha(x) = \operatorname{round}(u_\mathrm{DRQ}^\alpha(x)).
\end{align}
We can prove the following statement regarding the behavior of DRQ with respect to the parameter $\alpha\in[0,1]$.
\begin{proposition}\label{prop:monotonicity}
If $0 \leq \underline{\alpha} \leq {\overline{\alpha}} \leq 1$ then for all $x\in\R^d$ it holds
\begin{align}\label{ineq:alpha_monotone}
    u(\hat{x}_0^{\underline{\alpha}}) \leq u(\hat{x}_0^{\overline{\alpha}})
    \quad
    \text{and}
    \quad
    u(\hat{x}_1^{\underline{\alpha}}) \geq u(\hat{x}_1^{\overline{\alpha}}).
\end{align}
Furthermore, we have the following two estimates for the special cases ${\underline{\alpha}}=0$ and ${\overline{\alpha}}=1$:
\begin{align}\label{ineq:extreme_cases}
    u(\hat{x}_0^0) \leq u(x) \leq 
    u(\hat{x}_0^1)
    \quad
    \text{and}
    \quad
    u(\hat{x}_1^0) \geq u(x) \geq 
    u(\hat{x}_1^1).
\end{align}
\end{proposition}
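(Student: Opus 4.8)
The plan is to reduce both claims to the elementary fact that the maximum of a function over a larger set is at least its maximum over a smaller set (and dually for the minimum), exploiting that the exploration points $\tilde x_0$ and $\tilde x_1$ are defined independently of $\alpha$, so that the only $\alpha$-dependence sits in the radius $\alpha\epsilon$ of the inner ball.

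First I would establish \eqref{ineq:alpha_monotone}. Since $0\le\underline{\alpha}\le\overline{\alpha}\le 1$, the inner balls are nested: $B_p(\tilde x_0;\underline{\alpha}\epsilon)\subseteq B_p(\tilde x_0;\overline{\alpha}\epsilon)$, and likewise around $\tilde x_1$. Because $u$ is continuous and these balls are compact, the relevant extrema are attained, so $\hat x_0^{\alpha}$ and $\hat x_1^{\alpha}$ exist and the corresponding values are well defined. Monotonicity of the maximum over nested sets gives $u(\hat x_0^{\underline{\alpha}})=\max_{\hat x\in B_p(\tilde x_0;\underline{\alpha}\epsilon)}u(\hat x)\le\max_{\hat x\in B_p(\tilde x_0;\overline{\alpha}\epsilon)}u(\hat x)=u(\hat x_0^{\overline{\alpha}})$, while monotonicity of the minimum gives $u(\hat x_1^{\underline{\alpha}})=\min_{\hat x\in B_p(\tilde x_1;\underline{\alpha}\epsilon)}u(\hat x)\ge\min_{\hat x\in B_p(\tilde x_1;\overline{\alpha}\epsilon)}u(\hat x)=u(\hat x_1^{\overline{\alpha}})$, which is exactly \eqref{ineq:alpha_monotone}.

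Next I would treat the endpoint cases in \eqref{ineq:extreme_cases}. For $\alpha=0$ the ball $B_p(\tilde x_0;0)$ is the singleton $\{\tilde x_0\}$, hence $u(\hat x_0^{0})=u(\tilde x_0)=\min_{\tilde x\in B_p(x;\epsilon)}u(\tilde x)\le u(x)$, using $x\in B_p(x;\epsilon)$; symmetrically $u(\hat x_1^{0})=u(\tilde x_1)=\max_{\tilde x\in B_p(x;\epsilon)}u(\tilde x)\ge u(x)$. For $\alpha=1$ the key observation is that $\tilde x_0\in B_p(x;\epsilon)$ means $\|x-\tilde x_0\|_p\le\epsilon$, which by symmetry of $\|\cdot\|_p$ is the same as $x\in B_p(\tilde x_0;\epsilon)$; therefore $u(\hat x_0^{1})=\max_{\hat x\in B_p(\tilde x_0;\epsilon)}u(\hat x)\ge u(x)$, and similarly $x\in B_p(\tilde x_1;\epsilon)$ yields $u(\hat x_1^{1})=\min_{\hat x\in B_p(\tilde x_1;\epsilon)}u(\hat x)\le u(x)$. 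Chaining these four inequalities through \eqref{ineq:alpha_monotone} gives \eqref{ineq:extreme_cases}.

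The argument is essentially bookkeeping, so I do not expect a genuine obstacle; the only point that needs a moment of care is the inclusion $x\in B_p(\tilde x_i;\epsilon)$ used for $\alpha=1$, which relies on the symmetry of the norm, together with keeping track that $\tilde x_0$ and $\tilde x_1$ are fixed before $\alpha$ enters, so that the nested-ball monotonicity applies cleanly.
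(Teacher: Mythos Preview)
Your proposal is correct and matches the paper's own proof essentially step for step: both argue \eqref{ineq:alpha_monotone} via monotonicity of $\max$/$\min$ over the nested balls $B_p(\tilde{x}_i;\underline{\alpha}\epsilon)\subseteq B_p(\tilde{x}_i;\overline{\alpha}\epsilon)$, and both handle \eqref{ineq:extreme_cases} by noting that $\hat{x}_i^0=\tilde{x}_i$ together with $x\in B_p(x;\epsilon)$ for $\alpha=0$, and that $\tilde{x}_i\in B_p(x;\epsilon)$ implies $x\in B_p(\tilde{x}_i;\epsilon)$ for $\alpha=1$. Your added remarks on compactness/continuity and the symmetry of the norm are welcome clarifications but do not change the approach.
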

\begin{proof}
Since $\underline{\alpha}\leq\overline{\alpha}$ it holds
\begin{align*}
    u(\hat{x}_0^{\underline{\alpha}}) = \max_{B_p(\Tilde{x}_0;\underline{\alpha}\epsilon)} u 
    \leq
    \max_{B_p(\Tilde{x}_0;\overline{\alpha}\epsilon)} u
    =
    u(\hat{x}_0^{\overline{\alpha}}).
\end{align*}
Analogously, it holds
\begin{align*}
    u(\hat{x}_1^{\underline{\alpha}}) = \min_{B_p(\Tilde{x}_1;\underline{\alpha}\epsilon)} u 
    \geq
    \min_{B_p(\Tilde{x}_1;\overline{\alpha}\epsilon)} u
    =
    u(\hat{x}_1^{\overline{\alpha}}).
\end{align*}
This proves the inequalities \eqref{ineq:alpha_monotone}.
Regarding the special cases one notes that, since $B_p(\Tilde{x}_i,0\epsilon) = \{\Tilde{x}_i\}$, it obviously holds $\hat{x}_i^0=\Tilde{x}_i$ for $i\in\{0,1\}$.
Hence, it holds
\begin{align*}
    u(\Hat{x}_0) = u(\Tilde{x}_0) = \min_{B_p(x;\epsilon)} u \leq u(x)
    \quad
    \text{and}
    \quad
    u(\hat{x}_1) = u(\Tilde{x}_1) = \max_{B_p(x;\epsilon)} u \geq u(x),
\end{align*}
and analogously
\begin{align*}
    u(\hat{x}_0^1) = \max_{B_p(\Tilde{x}_0,\epsilon)} u \geq u(x)
    \quad
    \text{and}
    \quad
    u(\hat{x}_1^1) = \min_{B_p(\Tilde{x}_1,\epsilon)} u \leq u(x),
\end{align*}
where we utilized that, since $\Tilde{x}_i\in B_p(x;\epsilon)$, it also holds $x \in B_p(\Tilde{x}_i,\epsilon)$ for $i\in\{0,1\}$.
This proves \eqref{ineq:extreme_cases}.
\end{proof}

Inequality \eqref{ineq:alpha_monotone} in Proposition~\ref{prop:monotonicity} shows that with increasing values of $\alpha$, the confidences of the DRQ algorithm decrease for both classes.
Furthermore, for $\alpha=0$ DRQ increases confidences compared to the original binary classifier $u$, whereas for $\alpha=1$ confidences are decreased.
To prevent DRQ from being over- or underconfident, it therefore seems natural to choose $\alpha=0.5$, which is also confirmed by our empirical results.

We now theoretically investigate the potential of DRQ to smooth out spurious misclassifications and start with the following definition.
\begin{definition}
Let $u:\R^d\to\R$ be a function and $\epsilon>0$.
A point $x\in\R^d$ is called $\epsilon$-wide local minimum if
\begin{align*}
    u(x) \leq u(\Tilde{x})\quad\forall\Tilde{x}\in B_p(x;\epsilon)\setminus\{x\}.
\end{align*}
Analogously, $x$ is called $\epsilon$-wide local maximum if
\begin{align*}
    u(x) \geq u(\Tilde{x})\quad\forall\Tilde{x}\in B_p(x;\epsilon)\setminus\{x\}.
\end{align*}
An $\epsilon$-wide local minimum or maximum is called \emph{strict} if these inequalities are strict.
\end{definition}
Note that every $\overline\epsilon$-wide local extremum is also an $\underline{\epsilon}$-wide local extremum for $0<\underline{\epsilon}<\overline{\epsilon}$.
Furthermore, $u$ can have several other local extrema in the ball $B_p(x;\epsilon)$ which does not change the fact that $x$ is an $\epsilon$-wide local extremum.
For simplicity we restrict the following discussion to local minima of a classifier $u:\R^d\to[0,1]$.

The first observation is a trivial one:
\begin{proposition}
Let $x\in\R^d$ be a $2\epsilon$-wide local minimum of the classifier $u:\R^d\to[0,1]$ and let $\alpha\in[0,1]$.
Then it holds 
\begin{align}
    u(\hat{x}_0^\alpha) \geq u(x)
    \quad
    \text{and}
    \quad
    u(\hat{x}_1^\alpha) \geq u(x)
\end{align}
and therefore
\begin{align}
    u_\mathrm{DRQ}^\alpha(x) \geq u(x).
\end{align}
\end{proposition}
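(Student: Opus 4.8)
The plan is to leverage the variational characterizations of $\hat{x}_0^\alpha$ and $\hat{x}_1^\alpha$ together with the assumption that $x$ is a $2\epsilon$-wide local minimum, so that $u(x) \leq u(\tilde x)$ for every $\tilde x \in B_p(x;2\epsilon)$. The crucial geometric observation is that both $\hat{x}_0^\alpha$ and $\hat{x}_1^\alpha$ lie inside $B_p(x;2\epsilon)$: indeed $\tilde x_0 \in B_p(x;\epsilon)$ by definition of the exploration step, and $\hat{x}_0^\alpha \in B_p(\tilde x_0; \alpha\epsilon) \subseteq B_p(\tilde x_0; \epsilon)$, so by the triangle inequality $\|\hat{x}_0^\alpha - x\|_p \leq \|\hat{x}_0^\alpha - \tilde x_0\|_p + \|\tilde x_0 - x\|_p \leq \alpha\epsilon + \epsilon \leq 2\epsilon$; the same bound holds for $\hat{x}_1^\alpha$ via $\tilde x_1$.

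Given this inclusion, the first two inequalities follow immediately: since $\hat{x}_0^\alpha, \hat{x}_1^\alpha \in B_p(x;2\epsilon)$ and $x$ is a $2\epsilon$-wide local minimum, we get $u(\hat{x}_0^\alpha) \geq u(x)$ and $u(\hat{x}_1^\alpha) \geq u(x)$. For the final conclusion I would recall the definition \eqref{eq:binary_DRQ}: $u_\mathrm{DRQ}^\alpha(x)$ equals either $u(\hat{x}_0^\alpha)$ or $u(\hat{x}_1^\alpha)$ depending on which branch of the case distinction is active. In the first case $u_\mathrm{DRQ}^\alpha(x) = u(\hat{x}_0^\alpha) \geq u(x)$ directly. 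In the second case $u_\mathrm{DRQ}^\alpha(x) = u(\hat{x}_1^\alpha) \geq u(x)$ directly as well. Either way $u_\mathrm{DRQ}^\alpha(x) \geq u(x)$, which is the claim.

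There is essentially no obstacle here — as the paper itself remarks, this is "a trivial one." The only point requiring a modicum of care is confirming the radii add up correctly so that the relevant points stay within the $2\epsilon$-ball around $x$; this is why the hypothesis is stated with $2\epsilon$ rather than $\epsilon$, and it is precisely the factor $1 + \alpha \leq 2$ coming from nesting the quantification ball inside the exploration ball. One could even sharpen the hypothesis to "$(1+\alpha)\epsilon$-wide local minimum," but $2\epsilon$ is the clean uniform choice valid for all $\alpha \in [0,1]$, and I would keep it as stated.
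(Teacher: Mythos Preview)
Your proposal is correct and follows essentially the same approach as the paper: both arguments use the triangle inequality to show $\hat{x}_i^\alpha \in B_p(x;2\epsilon)$ and then apply the $2\epsilon$-wide local minimum hypothesis directly. The paper's proof is just a terser version of what you wrote, and your extra remark about the sharper $(1+\alpha)\epsilon$ radius is a nice observation not made explicit there.
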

\begin{proof}
Since $\Tilde{x}_i\in B_p(x;\epsilon)$ and therefore $\hat{x}_i^\alpha\in B_p(\Tilde{x}_i;\alpha\epsilon)\subset B_p(x;2\epsilon)$ it trivially holds
\begin{align*}
    u(\hat{x}_0^\alpha) = \max_{B_p(\Tilde{x}_0;\alpha\epsilon)} u \geq u(x), \qquad
    u(\hat{x}_1^\alpha) = \min_{B_p(\Tilde{x}_1;\alpha\epsilon)} u \geq u(x),
\end{align*}
which implies $u_\mathrm{DRQ}^\alpha(x)\geq u(x)$.
\end{proof}
For a more precise analysis, we have to quantify the sharpness of a local minimum.
If $x\in\R^d$ is a $\epsilon$-wide local minimum of $u$, we measure its sharpness by the non-negative and non-decreasing function
\begin{align}\label{eq:sharpness}
    g_\epsilon : [0,\infty) \to [0,1],\quad
    g_\epsilon(\alpha) := \max_{B_p(x;\alpha\epsilon)} u - \min_{B_p(x;\epsilon)}u.
\end{align}
Note that $g_\epsilon(0)=0$ and $g_\epsilon(1)$ describes the oscillation of $u$ on $B_p(x;\epsilon)$.
We now prove that, if the classifier $u$ is strongly convex around a local minimum, then DRQ pushes the classification of $x$ from class $0$ to class $1$ based on the sharpness of the local minimum, measured through \eqref{eq:sharpness} and the constant of strict convexity.
\begin{proposition}\label{prop:bounds}
Let $x\in\R^d$ be a $2\epsilon$-wide local minimum of the classifier $u:\R^d\to[0,1]$ and let $\alpha\in[0,1]$.
Assume that $u$ is $\mu$-strongly convex on $B_p(x;2\epsilon)$, meaning that there exists a constant $\mu>0$ such that
\begin{align}
    u(\hat{x}) \geq u(\Tilde{x}) + \langle \nabla u(\Tilde{x}),\hat{x}-\Tilde{x}\rangle + \mu\norm{\hat{x}-\Tilde{x}}_p^2,\quad\forall\hat{x},\Tilde{x}\in B_p(x;2\epsilon).
\end{align}
Then it holds
\begin{align}\label{ineq:estimate_DRQ_str_conv}
    u_\mathrm{DRQ}^\alpha(x) \geq
    \min
    \left\lbrace
    u(x) + g_\epsilon(\alpha),
    u(x) + \mu(1-\alpha)^2\epsilon^2.
    \right\rbrace
\end{align}
\end{proposition}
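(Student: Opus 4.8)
The plan is to treat the two candidate points $\Hat{x}_0^\alpha$ and $\Hat{x}_1^\alpha$ appearing in \eqref{eq:binary_DRQ} separately and then combine them through the case distinction defining $u_\mathrm{DRQ}^\alpha$. For $\Hat{x}_0^\alpha$ there is essentially nothing to estimate: since $x$ is a $2\epsilon$-wide, hence $\epsilon$-wide, local minimum, it minimizes $u$ over $B_p(x;\epsilon)$, and since $u$ is $\mu$-strongly convex with $\mu>0$ on the convex set $B_p(x;2\epsilon)\supseteq B_p(x;\epsilon)$ it is strictly convex there, so this minimizer is unique and equals $x$; thus $\Tilde{x}_0=x$ and $\Hat{x}_0^\alpha\in\argmax_{B_p(x;\alpha\epsilon)}u$. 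Using $\min_{B_p(x;\epsilon)}u=u(x)$ together with the definition \eqref{eq:sharpness} of $g_\epsilon$, this gives the exact identity $u(\Hat{x}_0^\alpha)=u(x)+g_\epsilon(\alpha)$.

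For $\Hat{x}_1^\alpha$ I would first record the elementary strong-convexity bound centered at $x$: because $x$ is an interior local minimum of the differentiable function $u$ we have $\nabla u(x)=0$, so inserting $\Tilde{x}=x$ into the strong-convexity inequality yields $u(\Hat{x})\geq u(x)+\mu\norm{\Hat{x}-x}_p^2$ for every $\Hat{x}\in B_p(x;2\epsilon)$. The second ingredient is the claim $\norm{\Tilde{x}_1-x}_p=\epsilon$: since $\Tilde{x}_1$ maximizes the strictly convex $u$ over the ball $B_p(x;\epsilon)$ it cannot lie in the interior — writing an interior $\Tilde{x}_1$ as a strict convex combination of $x$ and the point where the ray from $x$ through $\Tilde{x}_1$ meets the sphere, strict convexity would contradict maximality, while $\Tilde{x}_1=x$ would force $u$ to be constant on a ball of positive radius, again contradicting strict convexity. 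Since $\Tilde{x}_1\in B_p(x;\epsilon)$ we then have $B_p(\Tilde{x}_1;\alpha\epsilon)\subseteq B_p(x;(1+\alpha)\epsilon)\subseteq B_p(x;2\epsilon)$ (this is exactly where the $2\epsilon$-width is used), and the reverse triangle inequality gives $\norm{\Hat{x}-x}_p\geq\norm{\Tilde{x}_1-x}_p-\norm{\Hat{x}-\Tilde{x}_1}_p\geq(1-\alpha)\epsilon$ for all $\Hat{x}\in B_p(\Tilde{x}_1;\alpha\epsilon)$. Combining this with the strong-convexity bound yields $u(\Hat{x}_1^\alpha)\geq u(x)+\mu(1-\alpha)^2\epsilon^2$.

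It then remains to invoke the definition \eqref{eq:binary_DRQ}: if $u(\Hat{x}_0^\alpha)\leq 1-u(\Hat{x}_1^\alpha)$ then $u_\mathrm{DRQ}^\alpha(x)=u(\Hat{x}_0^\alpha)=u(x)+g_\epsilon(\alpha)$, and otherwise $u_\mathrm{DRQ}^\alpha(x)=u(\Hat{x}_1^\alpha)\geq u(x)+\mu(1-\alpha)^2\epsilon^2$; in both cases $u_\mathrm{DRQ}^\alpha(x)$ is at least the minimum of the two expressions, which is precisely \eqref{ineq:estimate_DRQ_str_conv}. As sanity checks, at $\alpha=1$ the claimed bound reduces to the trivial $u_\mathrm{DRQ}^1(x)\geq u(x)$ and at $\alpha=0$ likewise to $u_\mathrm{DRQ}^0(x)\geq u(x)$, both of which the argument reproduces.

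The only step I expect to be non-routine is establishing $\norm{\Tilde{x}_1-x}_p=\epsilon$, i.e. that the maximizer of a strictly convex function over an $\ell_p$-ball must lie on its sphere (equivalently, it is an extreme point of the ball); everything else is triangle-inequality bookkeeping, the vanishing of $\nabla u(x)$ at the interior minimum, and substitution into the strong-convexity inequality. A minor point to watch is that the estimate $u(\Hat{x})\geq u(x)+\mu\norm{\Hat{x}-x}_p^2$ is only legitimate because $B_p(\Tilde{x}_1;\alpha\epsilon)$ remains inside $B_p(x;2\epsilon)$, which is the reason the hypothesis demands a $2\epsilon$-wide rather than merely $\epsilon$-wide local minimum.
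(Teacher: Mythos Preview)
Your proof is correct and follows essentially the same route as the paper's: identify $\Tilde{x}_0=x$ by uniqueness of the strongly convex minimum to obtain $u(\Hat{x}_0^\alpha)=u(x)+g_\epsilon(\alpha)$, then combine $\nabla u(x)=0$, strong convexity, the fact that $\Tilde{x}_1$ lies on the sphere $\partial B_p(x;\epsilon)$, and the reverse triangle inequality to obtain $u(\Hat{x}_1^\alpha)\geq u(x)+\mu(1-\alpha)^2\epsilon^2$, before concluding via the case distinction in \eqref{eq:binary_DRQ}. You even supply more detail than the paper on two points it leaves implicit: the argument for why the maximizer of a strictly convex function over a ball must lie on the boundary, and the observation that the containment $B_p(\Tilde{x}_1;\alpha\epsilon)\subseteq B_p(x;2\epsilon)$ is precisely what justifies applying the strong-convexity inequality at $\Hat{x}_1^\alpha$ and hence why the hypothesis requires a $2\epsilon$-wide minimum.
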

\begin{proof}
Since $u$ is $\mu$-strongly convex, $x$ is in fact a strict $\epsilon$-wide local minimum, meaning $\argmin_{B_p(x;\epsilon)}u  =\{x\}$.
Hence, we get $\Tilde{x}_0=x$ and by definition of $g_\epsilon$ it therefore holds 
\begin{align}\label{ineq:bound1}
    u(\hat{x}_0^\alpha) = \max_{B_p(\Tilde{x}_0;\alpha\epsilon)} u = \max_{B_p(x;\alpha\epsilon)} u = u(x) + g_\epsilon(\alpha).
\end{align}
Using the strong convexity and the fact that $\nabla u(x)=0$---since $x$ is a local minimum---it also holds
\begin{align*}
    u(\hat{x}_1^\alpha) \geq u(x) + \langle\underbrace{\nabla u(x)}_{=0},\hat{x}_1^\alpha-x\rangle+\mu\norm{\hat{x}_1^\alpha-x}_p^2 = u(x) + \mu\norm{\hat{x}_1^\alpha-x}_p^2.
\end{align*}
Now by definition it holds $\Tilde{x}_1\in\argmax_{B_p(x;\epsilon)}u$ and $\hat{x}_1^\alpha\in\argmin_{B_p(\Tilde{x}_1,\alpha\epsilon)}u$.
The strong convexity of $u$ implies that $\Tilde{x}_1$ lies on the boundary of $B_p(x;\epsilon)$. 
Hence, $\hat{x}_1^\alpha$ has distance at least $(1-\alpha)\epsilon$ from $x$ which can be seen as follows:
\begin{align*}
    \norm{\hat{x}_1^\alpha - x}_p \geq \norm{\Tilde{x}_1-x}_p-\norm{\hat{x}_1^\alpha - \Tilde{x}_1}_p\geq
    \epsilon - \alpha\epsilon = (1-\alpha)\epsilon.
\end{align*}
Using this in the estimate above we find
\begin{align}\label{ineq:bound2}
    u(\hat{x}_1^\alpha) \geq u(x) + \mu(1-\alpha)^2\epsilon^2
\end{align}
and, combing \eqref{ineq:bound1} and \eqref{ineq:bound2}, the DRQ classifier \eqref{eq:binary_DRQ} satisfies \eqref{ineq:estimate_DRQ_str_conv} as claimed.
\end{proof}
Again, the bound in \eqref{ineq:estimate_DRQ_str_conv} shows that $\alpha=0.5$ is a balanced choice:
For $\alpha=0$ the confidence of class $0$ remains $u(x)$ where as the one for class $1$ increases by $\mu\epsilon^2$.
In contrast for $\alpha=1$ the confidence of class $0$ decreases whereas the one for class $1$ satisfies the bound $u(\hat{x}_1^1)\geq u(x)$.
Together with the converse inequality from Proposition~\ref{prop:monotonicity} this implies $u(\hat{x}_1^1)=u(x)$, meaning that the confidence of class $1$ remains constant.
For in-between choices of $\alpha$ one can therefore expect that for spurious minima the confidence of class $0$ decreases and the one for class $1$ increases, which is the desired effect of DRQ.

The choice $\alpha=0.5$ is further motivated by the following corollary.
\begin{corollary}
Under the conditions of Proposition~\ref{prop:bounds} it holds
\begin{align}\label{ineq:estimate_DRQ_str_conv_weaker}
    u_\mathrm{DRQ}^\alpha(x) \geq
    \min
    \left\lbrace
    u(x) + \mu\alpha^2\epsilon^2,
    u(x) + \mu(1-\alpha)^2\epsilon^2,
    \right\rbrace
\end{align}
which for $\alpha=0.5$ yields
\begin{align}
    u_\mathrm{DRQ}^\alpha(x) \geq
    u(x) + \frac{\mu}{4}\epsilon^2.
\end{align}
\end{corollary}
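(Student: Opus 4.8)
The plan is to derive \eqref{ineq:estimate_DRQ_str_conv_weaker} directly from the bound \eqref{ineq:estimate_DRQ_str_conv} of Proposition~\ref{prop:bounds}, the only new ingredient being a lower bound on the sharpness quantity $g_\epsilon$ of the form $g_\epsilon(\alpha)\geq\mu\alpha^2\epsilon^2$. Since $u$ is $\mu$-strongly convex on $B_p(x;2\epsilon)$ and $x$ is a $2\epsilon$-wide local minimum, we have $\nabla u(x)=0$, so applying the strong-convexity inequality with $\Tilde{x}=x$ gives the pointwise estimate $u(\hat{x})\geq u(x)+\mu\norm{\hat{x}-x}_p^2$ for every $\hat{x}\in B_p(x;2\epsilon)$. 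Moreover, as already observed in the proof of Proposition~\ref{prop:bounds}, strong convexity makes $x$ the strict minimizer of $u$ on $B_p(x;\epsilon)$, hence $\min_{B_p(x;\epsilon)}u=u(x)$ and therefore, by definition \eqref{eq:sharpness}, $g_\epsilon(\alpha)=\max_{B_p(x;\alpha\epsilon)}u-u(x)$.

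Next I would lower-bound $\max_{B_p(x;\alpha\epsilon)}u$ by evaluating $u$ at any point $\hat{x}$ with $\norm{\hat{x}-x}_p=\alpha\epsilon$; such a point trivially exists in $\R^d$, and it lies in $B_p(x;2\epsilon)$ since $\alpha\leq 1$, so the pointwise estimate above applies and yields $\max_{B_p(x;\alpha\epsilon)}u\geq u(x)+\mu\alpha^2\epsilon^2$, i.e.\ $g_\epsilon(\alpha)\geq\mu\alpha^2\epsilon^2$. Substituting this into \eqref{ineq:estimate_DRQ_str_conv} and using that $\min$ is monotone in each argument gives $u_\mathrm{DRQ}^\alpha(x)\geq\min\{u(x)+\mu\alpha^2\epsilon^2,\,u(x)+\mu(1-\alpha)^2\epsilon^2\}$, which is exactly \eqref{ineq:estimate_DRQ_str_conv_weaker}.

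Finally, for $\alpha=0.5$ the two arguments of the minimum in \eqref{ineq:estimate_DRQ_str_conv_weaker} coincide and both equal $u(x)+\frac{\mu}{4}\epsilon^2$, which is the stated bound. I do not expect a genuine obstacle: the corollary is a bookkeeping consequence of Proposition~\ref{prop:bounds}, with the only points meriting (minor) care being that the strong-convexity estimate is valid on $B_p(x;\alpha\epsilon)\subset B_p(x;2\epsilon)$ and that the maximum defining $g_\epsilon(\alpha)$ ranges over the full ball of radius $\alpha\epsilon$, so that a point at distance exactly $\alpha\epsilon$ from $x$ may indeed be used.
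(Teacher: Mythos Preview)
Your proposal is correct and follows essentially the same approach as the paper: both reduce the corollary to the lower bound $g_\epsilon(\alpha)\geq\mu\alpha^2\epsilon^2$, derived from strong convexity together with $\nabla u(x)=0$, and then plug this into \eqref{ineq:estimate_DRQ_str_conv}. The only cosmetic difference is that the paper takes $\hat{x}\in\argmax_{B_p(x;\alpha\epsilon)}u$ and argues it lies on the boundary of $B_p(x;\alpha\epsilon)$ by strong convexity, whereas you simply evaluate $u$ at any point with $\norm{\hat{x}-x}_p=\alpha\epsilon$; both yield the same bound.
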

\begin{proof}
Using the strong convexity one can estimate the sharpness $g_\epsilon(\alpha)$ from below.
Let $\hat{x}\in \argmax_{B_p(x;\alpha\epsilon)}u$. 
Then it holds
\begin{align*}
    g_\epsilon(\alpha) = u(\hat{x}) - u(x) \geq \mu\norm{\hat{x}-x}_p^2.
\end{align*}
Since $u$ is strongly convex, $\hat{x}$ lies on the boundary of $B_p(x;\alpha\epsilon)$ and hence
\begin{align*}
    g_\epsilon(\alpha) \geq \mu\alpha^2\epsilon^2.
\end{align*}
This then yields the estimates.
\end{proof}

\section{DRQ Hyperparameters} \label{app:hyper}

\subsection{Exploration and Quantification Iterations} \label{app:iterations}

Figure \ref{fig:iterations} shows the accuracy of DRQ for clean and adversarial data. Subfigure (a) shows the accuracy for a fixed number of quantification iterations and increasing exploration iterations. In contrast, the exploration iterations are fixed in subfigure (b) while the quantification iterations are increased.

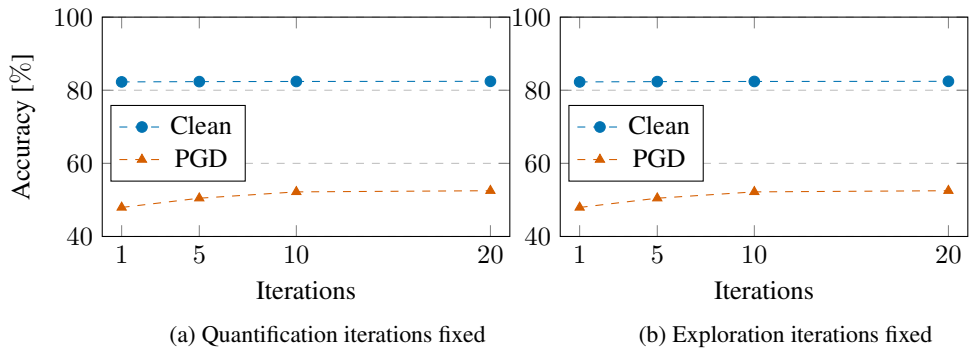
\begin{figure*}[h] 
\centering
\begin{subfigure}{.35\textwidth}
    \begin{tikzpicture} [
trim axis left, trim axis right
]

\begin{axis}[
    height=4.5cm,
    width=7cm,
    xlabel={Iterations},
    ylabel={Accuracy [$\%$]},
    xmin=0,
    xmax=21,
    xtick={1, 5, 10, 20},
    ymin=40, ymax=100,
    ytick={40, 60, 80, 100},
    ymajorgrids=true,
    grid style=dashed,
    legend style={at={(0.02,0.25)},anchor=south west}
]
\addplot[
    dashed,
    color=color0,
    mark=*,
    mark options={solid}
    ]
    coordinates {
    (1, 82.28)(5, 82.36)(10,82.40)(20,82.44)
    };
\addlegendentry{Clean}
\addplot[
    dashed,
    color=color1,
    mark=triangle*,
    mark options={solid}
    ]
    coordinates {
    (1, 47.94)(5, 50.47)(10,52.20)(20,52.52)
    };
\addlegendentry{PGD}
\end{axis}

\end{tikzpicture}
    \caption{Quantification iterations fixed}
\end{subfigure}
\begin{subfigure}{.35\textwidth}
    \begin{tikzpicture} [
trim axis left, trim axis right
]

\begin{axis}[
    height=4.5cm,
    width=7cm,
    xlabel={Iterations},
    xmin=0,
    xmax=21,
    xtick={1, 5, 10, 20},
    ymin=40, ymax=100,
    ytick={40, 60, 80, 100},
    ymajorgrids=true,
    grid style=dashed,
    legend style={at={(0.02,0.25)},anchor=south west}
]
\addplot[
    dashed,
    color=color0,
    mark=*,
    mark options={solid}
    ]
    coordinates {
    (1, 82.28)(5, 82.36)(10,82.40)(20,82.44)
    };
\addlegendentry{Clean}
\addplot[
    dashed,
    color=color1,
    mark=triangle*,
    mark options={solid}
    ]
    coordinates {
    (1, 47.94)(5, 50.47)(10,52.20)(20,52.52)
    };
\addlegendentry{PGD}
\end{axis}

\end{tikzpicture}
    \caption{Exploration iterations fixed}
\end{subfigure}
\caption{Accuracy for clean images and PGD attacked images for the model proposed by \citet{Wong2020Fast} on the CIFAR10 dataset with the DRQ $\ell_{\infty}$ method. The graph shows the accuracy for: (a) A fixed number of $20$ quantification step iterations and variable exploration iterations and (b) A fixed number of $20$ exploration step iterations and variable quantification iterations. The performance increases monotonically with the amount of iterations till it converges at about $20$ iterations for both steps.}
\label{fig:iterations}
\end{figure*}

\subsection{Confidence Threshold} \label{app:conf}

Figure \ref{fig:confidence} illustrates how the calibration distance $\epsilon_p$ is dependent on the confidence threshold $c$. The higher the confidence threshold, the harder it is for the FMN attack to find an adversarial example, and hence the larger the mean calibration distance becomes.  

\begin{figure}[h]
  \centering
  \begin{subfigure}{.3\textwidth}
      \centering
      \input{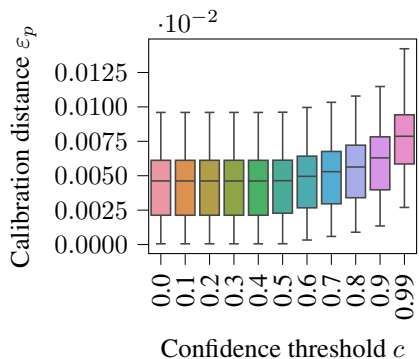}
  \end{subfigure}
  \caption{Calibration distance $\epsilon_p$ for different confidence thresholds $c$ for the non-adversarially trained CIFAR10 model proposed in \cite{Hendrycks2019Benchmarking}. The calibration distance $\epsilon_p$ remains largely constant for values $c \leq 0.5$ and rapidly increases for $c > 0.9$.}
  \label{fig:confidence}
  \end{figure}
  
\section{Denoising Autoencoder} \label{app:auto}

For the experiment described in Section \ref{sec:common_corruptions}, we trained a denoising autoencoder based on the architecture proposed in \cite{ZhangDenoising17} on the training set of the CIFAR10 dataset. We used $17$ layers and set the number of output channels of the first layer to $64$. We used a batch size of $128$ and an Adam optimizer with a learning rate of $0.001$ \citef{KingmaAdam2014}. We injected Gaussian noise to every image during training with a standard deviation of $0.1$. We used the mean squared error loss and trained the model for $40$ epochs, which was sufficient for convergence.

\section{DRQ Additional Experiments} \label{app:drq_standard_network}

\subsection{Normally Trained Models}

We additionally evaluated DRQ on networks trained without additional data augmentation methods or adversarial training. We observed no considerable differences in robustness or clean accuracy for any of the datasets and models. A more in-depth analysis revealed that the $\epsilon_p$ determined during the calibration step was orders of magnitude smaller than the one obtained for the adversarially trained models. We hypothesize that networks must exhibit at least some robustness to distribution shifts in order for DRQ to affect the robustness.
  
\subsection{Accuracy on Clean Data}

In Table~\ref{tab:clean}, we investigate the difference in accuracy on clean data for different models and datasets for the two DRQ approaches and standard inference. We observe that DRQ has no considerable effect on the clean accuracy of the different models and will explore this behavior in future work.

\begin{table*}
\small
\caption{Accuracy [$\%$] of standard inference and DRQ inference in two different norms. The accuracy is shown for differently trained models on the CIFAR10, and ImageNet datasets. The best accuracy is shown in \textbf{bold} and the second best is \underline{underlined}.}
\centering
\begin{tabular}{lrrrr}
    \toprule
    Models & Training & Standard & DRQ $\ell_{\infty}$ & DRQ $\ell_2$  \\
    \midrule
    \textbf{CIFAR10} \\
    \citef{Wong2020Fast} &  $\ell_{\infty}$-AT & \textbf{83.34} & 82.44 & \underline{83.11} \\
    \citef{Gowal2020Uncovering} & $\ell_{\infty}$-AT & \underline{89.48} & 89.22 & \textbf{89.62} \\
    \citef{Rice2020Overfitting} & $\ell_{2}$-AT & \underline{88.67} & \underline{88.63} & \textbf{88.84} \\
    \citef{hendrycks2021Faces} & No-AT & \underline{95.08} & \underline{95.07} & \textbf{95.28} \\
    \midrule
    \textbf{ImageNet} \\
    \citef{Wong2020Fast} & $\ell_{\infty}$-AT & \textbf{53.8} & \textbf{53.8} & \textbf{53.9} \\
    \citef{hendrycks2021Faces} & No-AT & \textbf{75.8} & \textbf{75.8} & \textbf{75.8} \\
    \bottomrule
    \end{tabular}
    \label{tab:clean}
\end{table*}

\section{Attack Success Rate} \label{app:attack_success}

The success rate of the individual attacks is exemplified in Table \ref{tab:adversarial_wong} for the model proposed in \cite{Wong2020Fast}. Furthermore, we attack the whole DRQ pipeline by including DRQ in the forward and backward pass of an PGD-based adversarial attack. To this end, we substitue the sign function of the gradient in the different steps with the gradient to make the whole pipeline differentiable (this does not affect the performance of DRQ). We further skipped the calibration step in these experiments and used a predefined $\epsilon_p$. For the same amount of attack iterations, this attack achieved a marginally better success rate than our adaptive attack (+0.11\% success rate for the model proposed by \cite{Wong2020Fast}). However, the proposed adaptive attacks were considerably more effective with the same amount of computational complexity. We explored the behavior of the end-to-end attack and observed a vanishing/exploding gradient problem due to the size of the computational graph. This phenomenon was also observed by \citet{Athalye2018Obfuscated}, when attacking models that include an iterative process. We further explored using the score-based black-box square attack on the whole pipeline but this was not more effective.

\begin{table*}[h]
\small
\caption{Accuracy [$\%$] of standard inference and DRQ inference in two different norms against different adversarial attacks for the CIFAR10 model proposed by \cite{Wong2020Fast}. The worst-case accuracy of the combined attacks is shown in the rightmost column.}
\centering
\begin{tabular}{lrrrrrrrrrrr}
    \toprule
Attacks & Clean & FAB & Square & APGD\textsubscript{L2} & APGD\textsubscript{CE} & APGD\textsubscript{Noise} & APGD\textsubscript{Attack} & APGD\textsubscript{DLR} & Worst-case \\
\midrule
Standard & 83.34 & 44.35 & 51.22 & 61.38 & 45.83 & 46.77 & 53.59 & 47.05 & 43.37 \\
DRQ $\ell_2$ & 83.17 & 70.19 & 70.0 & 65.36 & 49.93 & 48.68 & 54.05 & 51.64 & 43.39 \\
DRQ $\ell_{\infty}$ & 81.16 & 71.43 & 72.28 & 72.02 & 58.97 & 52.62 & 53.73 & 62.27 & 48.52 \\
        \bottomrule
    \end{tabular}
    \label{tab:adversarial_wong}
\end{table*}

\section{DRQ Exploration Directions} \label{app:drq_directions}

Figure \ref{fig:drq_directions} shows, the cosine similarity between DRQ direction $\gamma_{\mathrm{DRQ}} = \Hat{x}_{\mathrm{DRQ}} - x$ and the adversarial perturbation direction $\gamma = x_{\mathrm{adv}} - x$ for the model proposed by \citet{Wong2020Fast}, as described in Section \ref{sec:adversarial}. The mean cosine similarity of all attacks is closer or equal to zero. The lowest mean cosine similarity is displayed by the adaptive PGD Noise attack. However, for this attack DRQ was not able to correct the prediction as often (+$5.85\%$ accuracy with DRQ) as for example for the other attacks (e.g., Square: +$21.06\%$ accuracy with DRQ). We argue that the perturbations found by the adaptive PGD Noise attack are more robust. Thus, there exist less perturbation directions that can correct the prediction and DRQ is forced to find the opposing direction to the adversarial perturbation $\gamma$, such that $-\gamma_{\mathrm{DRQ}} \approx \gamma$. This leads to a lower cosine similarity for the PGD Noise attack in \ref{fig:drq_directions}.

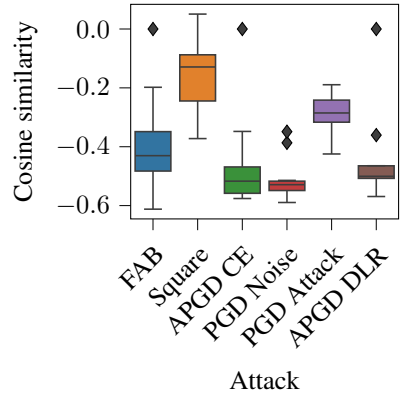
\begin{figure}[h]
  \centering
  \begin{subfigure}{.3\textwidth}
      \centering
\begin{tikzpicture}[
trim axis left, trim axis right
]

\definecolor{color0}{rgb}{0.194607843137255,0.453431372549019,0.632843137254902}
\definecolor{color1}{rgb}{0.881862745098039,0.505392156862745,0.173039215686275}
\definecolor{color2}{rgb}{0.229411764705882,0.570588235294118,0.229411764705882}
\definecolor{color3}{rgb}{0.75343137254902,0.238725490196078,0.241666666666667}
\definecolor{color4}{rgb}{0.578431372549019,0.446078431372549,0.699019607843137}
\definecolor{color5}{rgb}{0.517156862745098,0.358333333333333,0.325980392156863}

\begin{axis}[
tick align=outside,
tick pos=left,
width=\linewidth,
x grid style={white!69.0196078431373!black},
xlabel={Attack},
xmin=-0.5, xmax=5.5,
xtick style={color=black},
xtick={0,1,2,3,4,5},
xticklabel style={rotate=45.0,anchor=north east},
xticklabels={FAB,Square,APGD CE,PGD Noise,PGD Attack,APGD DLR},
y grid style={white!69.0196078431373!black},
ylabel={Cosine similarity},
ymin=-0.64522976949811, ymax=0.0838073171675205,
ytick style={color=black},
ytick={-0.8,-0.6,-0.4,-0.2,0,0.2},
yticklabels={
  \(\displaystyle {\ensuremath{-}0.8}\),
  \(\displaystyle {\ensuremath{-}0.6}\),
  \(\displaystyle {\ensuremath{-}0.4}\),
  \(\displaystyle {\ensuremath{-}0.2}\),
  \(\displaystyle {0.0}\),
  \(\displaystyle {0.2}\)
}
]
\path [draw=white!23.921568627451!black, fill=color0, semithick]
(axis cs:-0.4,-0.482635796070099)
--(axis cs:0.4,-0.482635796070099)
--(axis cs:0.4,-0.34867525100708)
--(axis cs:-0.4,-0.34867525100708)
--(axis cs:-0.4,-0.482635796070099)
--cycle;
\path [draw=white!23.921568627451!black, fill=color1, semithick]
(axis cs:0.6,-0.244527224451303)
--(axis cs:1.4,-0.244527224451303)
--(axis cs:1.4,-0.0876418221741915)
--(axis cs:0.6,-0.0876418221741915)
--(axis cs:0.6,-0.244527224451303)
--cycle;
\path [draw=white!23.921568627451!black, fill=color2, semithick]
(axis cs:1.6,-0.5582055747509)
--(axis cs:2.4,-0.5582055747509)
--(axis cs:2.4,-0.468662843108177)
--(axis cs:1.6,-0.468662843108177)
--(axis cs:1.6,-0.5582055747509)
--cycle;
\path [draw=white!23.921568627451!black, fill=color3, semithick]
(axis cs:2.6,-0.54875060915947)
--(axis cs:3.4,-0.54875060915947)
--(axis cs:3.4,-0.516965791583061)
--(axis cs:2.6,-0.516965791583061)
--(axis cs:2.6,-0.54875060915947)
--cycle;
\path [draw=white!23.921568627451!black, fill=color4, semithick]
(axis cs:3.6,-0.316602736711502)
--(axis cs:4.4,-0.316602736711502)
--(axis cs:4.4,-0.241626307368279)
--(axis cs:3.6,-0.241626307368279)
--(axis cs:3.6,-0.316602736711502)
--cycle;
\path [draw=white!23.921568627451!black, fill=color5, semithick]
(axis cs:4.6,-0.507466807961464)
--(axis cs:5.4,-0.507466807961464)
--(axis cs:5.4,-0.464789167046547)
--(axis cs:4.6,-0.464789167046547)
--(axis cs:4.6,-0.507466807961464)
--cycle;






\addplot [semithick, white!23.921568627451!black, forget plot]
table {%
0 -0.482635796070099
0 -0.612091720104218
};
\addplot [semithick, white!23.921568627451!black, forget plot]
table {%
0 -0.34867525100708
0 -0.197981804609299
};
\addplot [semithick, white!23.921568627451!black, forget plot]
table {%
-0.2 -0.612091720104218
0.2 -0.612091720104218
};
\addplot [semithick, white!23.921568627451!black, forget plot]
table {%
-0.2 -0.197981804609299
0.2 -0.197981804609299
};
\addplot [black, mark=diamond*, mark size=2.5, mark options={solid,fill=white!23.921568627451!black}, only marks, forget plot]
table {%
0 0
};
\addplot [semithick, white!23.921568627451!black, forget plot]
table {%
1 -0.244527224451303
1 -0.372322499752045
};
\addplot [semithick, white!23.921568627451!black, forget plot]
table {%
1 -0.0876418221741915
1 0.0506692677736282
};
\addplot [semithick, white!23.921568627451!black, forget plot]
table {%
0.8 -0.372322499752045
1.2 -0.372322499752045
};
\addplot [semithick, white!23.921568627451!black, forget plot]
table {%
0.8 0.0506692677736282
1.2 0.0506692677736282
};
\addplot [semithick, white!23.921568627451!black, forget plot]
table {%
2 -0.5582055747509
2 -0.575908184051514
};
\addplot [semithick, white!23.921568627451!black, forget plot]
table {%
2 -0.468662843108177
2 -0.347975522279739
};
\addplot [semithick, white!23.921568627451!black, forget plot]
table {%
1.8 -0.575908184051514
2.2 -0.575908184051514
};
\addplot [semithick, white!23.921568627451!black, forget plot]
table {%
1.8 -0.347975522279739
2.2 -0.347975522279739
};
\addplot [black, mark=diamond*, mark size=2.5, mark options={solid,fill=white!23.921568627451!black}, only marks, forget plot]
table {%
2 0
};
\addplot [semithick, white!23.921568627451!black, forget plot]
table {%
3 -0.54875060915947
3 -0.589447319507599
};
\addplot [semithick, white!23.921568627451!black, forget plot]
table {%
3 -0.516965791583061
3 -0.514293909072876
};
\addplot [semithick, white!23.921568627451!black, forget plot]
table {%
2.8 -0.589447319507599
3.2 -0.589447319507599
};
\addplot [semithick, white!23.921568627451!black, forget plot]
table {%
2.8 -0.514293909072876
3.2 -0.514293909072876
};
\addplot [black, mark=diamond*, mark size=2.5, mark options={solid,fill=white!23.921568627451!black}, only marks, forget plot]
table {%
3 -0.386733680963516
3 -0.348839998245239
};
\addplot [semithick, white!23.921568627451!black, forget plot]
table {%
4 -0.316602736711502
4 -0.424588739871979
};
\addplot [semithick, white!23.921568627451!black, forget plot]
table {%
4 -0.241626307368279
4 -0.189326524734497
};
\addplot [semithick, white!23.921568627451!black, forget plot]
table {%
3.8 -0.424588739871979
4.2 -0.424588739871979
};
\addplot [semithick, white!23.921568627451!black, forget plot]
table {%
3.8 -0.189326524734497
4.2 -0.189326524734497
};
\addplot [semithick, white!23.921568627451!black, forget plot]
table {%
5 -0.507466807961464
5 -0.569108307361603
};
\addplot [semithick, white!23.921568627451!black, forget plot]
table {%
5 -0.464789167046547
5 -0.464593052864075
};
\addplot [semithick, white!23.921568627451!black, forget plot]
table {%
4.8 -0.569108307361603
5.2 -0.569108307361603
};
\addplot [semithick, white!23.921568627451!black, forget plot]
table {%
4.8 -0.464593052864075
5.2 -0.464593052864075
};
\addplot [black, mark=diamond*, mark size=2.5, mark options={solid,fill=white!23.921568627451!black}, only marks, forget plot]
table {%
5 -0.360520392656326
5 0
};
\addplot [semithick, white!23.921568627451!black, forget plot]
table {%
-0.4 -0.430354356765747
0.4 -0.430354356765747
};
\addplot [semithick, white!23.921568627451!black, forget plot]
table {%
0.6 -0.12899025529623
1.4 -0.12899025529623
};
\addplot [semithick, white!23.921568627451!black, forget plot]
table {%
1.6 -0.516935884952545
2.4 -0.516935884952545
};
\addplot [semithick, white!23.921568627451!black, forget plot]
table {%
2.6 -0.528887212276459
3.4 -0.528887212276459
};
\addplot [semithick, white!23.921568627451!black, forget plot]
table {%
3.6 -0.285254746675491
4.4 -0.285254746675491
};
\addplot [semithick, white!23.921568627451!black, forget plot]
table {%
4.6 -0.500413045287132
5.4 -0.500413045287132
};
\end{axis}

\end{tikzpicture}
  \end{subfigure}
  \caption{Cosine similarity between the DRQ direction $\gamma_{\mathrm{DRQ}} = \Hat{x}_{\mathrm{DRQ}} - x$ and the adversarial perturbation direction $\gamma = x_{\mathrm{adv}} - x$ for the CIFAR10 model proposed in \cite{Wong2020Fast}. Negative values indicate that DRQ moves the adversarial example closer to the original input.}
  \label{fig:drq_directions}
  \end{figure}




\section{Exploration and Quantification Step} \label{app:ablation}

We conducted additional experiments to assess if the adversarial optimization in the exploration and quantification steps of DRQ are necessary to increase robustness. To this end, we replaced the exploration step with random noise augmentations, which we describe as \textit{random exploration}. We also replaced the quantification step with random noise augmentations. Instead of searching for the sample that shows the lowest robustness, we inject $20$ different random noise augmentations to every input and select the augmented sample with the lowest robustness. We refer to this procedure as \textit{random quantification}. Table \ref{tab:ablation_noise} summarizes the results of replacing either the exploration or the quantification step or both with their random versions. The experiment was conducted for the model proposed in \cite{Wong2020Fast} on the CIFAR10-C dataset and against the adversarial attack ensemble on CIFAR10. Replacing either the exploration or quantification step reduces the robustness in all cases. This highlights the importance of accurately estimating the robustness of the local decision regions in both steps of the DRQ algorithm. 

\begin{table*}[h]
\small
\caption{Ablation study on the accuracy differences [$\%$] between DRQ and standard prediction for different region exploration and region quantification approaches for the model proposed in \cite{Wong2020Fast}. Here, either the exploration or quantification step or both are replaced by a random approximation.}
\centering
\begin{tabular}{lrr|rr}
\toprule
Data & \multicolumn{2}{c}{CIFAR10-C} & \multicolumn{2}{c}{CIFAR10 worst-case} \\
\midrule
 & Exploration - Random & Exploration - $\ell_{\infty}$  & Exploration - Random & Exploration-$\ell_{\infty}$ \\
Quantification - Random & -0.1 & -0.5 & -0.75 & -0.27 \\
Quantification - $\ell_{\infty}$ & -0.25 & \textbf{+2.03} & -0.35 & \textbf{+5.85} \\
    \end{tabular}
    \label{tab:ablation_noise}
\end{table*}

\end{document}